\newtheorem{theorem}{Theorem}
\newtheorem{proposition}{Proposition}[section]
\title{Beyond Dynamic Programming}
\author{%
 Abhinav Muraleedharan\thanks{Graduate Student, University of Toronto.} \\
  University of Toronto\\
  Toronto,
 Canada \\
  \texttt{Abhinav.Muraleedharan@mail.utoronto.ca} \\
}
\begin{document}
\maketitle
\begin{abstract}
 In this paper, we present Score-life programming, a novel theoretical approach for solving reinforcement learning problems. In contrast with classical dynamic programming-based methods, our method can search over non-stationary policy functions, and can directly compute optimal infinite horizon action sequences from a given state. The central idea in our method is the construction of a mapping between infinite horizon action sequences and real numbers in a bounded interval. This construction enables us to formulate an optimization problem for directly computing optimal infinite horizon action sequences, without requiring a policy function. We demonstrate the effectiveness of our approach by applying it to nonlinear optimal control problems. Overall, our contributions provide a novel theoretical framework for formulating and solving reinforcement learning problems.
\end{abstract}

\section{Introduction}
Reinforcement learning \cite{sutton2018reinforcement} is a principled mathematical framework for designing autonomous agents that can interact with the environment and produce optimal behaviours. Notable accomplishments in demanding AI problem spaces have been achieved through the utilization of reinforcement learning methodologies. Examples include surpassing human-level performance in the game of Go \cite{silver2016mastering}, facilitating intricate manipulation abilities in robotic agents \cite{andrychowicz2020learning}, and addressing a wide array of challenging AI problems \cite{li2017deep}. At the heart of these reinforcement learning algorithms lies the Dynamic Programming method, first outlined by Bellman \cite{bellman1966dynamic}. In Dynamic programming-based methods, the design of intelligent agents is reduced to the mathematical problem of computing an optimal policy function, to minimize a specific cumulative cost. Many reinforcement learning algorithms such as Temporal Difference learning \cite{sutton1988learning}, Fitted Value Iteration \cite{gordon1995stable,ernst2005tree,munos2008finite}, and Deep Q Network \cite{mnih2015human} are based on the classical Dynamic Programming method. Classical Dynamic programming-based methods are however mostly limited to stationary policies (time-invariant policy functions). However, several examples exist where non-stationary policies are of interest \cite{bertsekas2022abstract}. \\

Consider an agent $A$ living in a bounded state space $X$ containing $P$ different states. Let the the set of all possible  actions be $U$ and let $|U|=M$. From any given state $x$, the agent can apply actions sequentially, from the set $U$. The set of all possible infinite horizon action sequences, taken from the state $x$ is uncountably infinite. However, the set of all infinite horizon action sequences that can be generated by a Dynamic Programming approach, from a deterministic feedback policy $\pi(x)$ is finite and is equal to $P^M$. Hence, a Dynamic Programming-based method would not be able to search over all possible infinite horizon action sequences. This would limit the set of all possible behaviors that can be achieved in an intelligent agent, and hence the development of methods for computing optimal non-stationary policies is of significant theoretical and practical interest.
\\

In this paper, we introduce an alternative theoretical approach to solving the infinite horizon problem, that can search over non-stationary and stationary policies in a discrete-time setting. We also do not assume monotonicity and contraction properties of the Bellman Operator \cite{bertsekas2022abstract}, and hence our method is applicable to scenarios when the Bellman Operator is not a contraction. Mathematically, our approach involves the construction of a mapping between real numbers and infinite horizon action sequences. Each infinite horizon action sequence is mapped to a unique real number in the interval \begin{math}
    [0,1)
\end{math}. Next, we define a function that maps the real numbered values to infinite horizon costs of specific action sequences. We then formulate an optimization problem to compute the optimal cost to go and optimal infinite horizon action sequence. Our approach also bridges fractal functions and problems in optimal control/RL, and offers novel theoretical insights for solving nonlinear optimal control problems.
\section{Problem Setting}
\noindent We consider the deterministic, discrete-time RL problem. The system dynamics can be expressed as:
\begin{equation}
   {x_{k+1}} = f({x_{k}},u_k) 
\end{equation}
where \(x_k \in X\) is the state at timestep \(k\) and \(u_k \in U \) is the control input timestep \(k\). The set \(X\) is a countably infinite set consisting of all possible states in the state space. 
The infinite horizon cost function can be expressed as:
\begin{equation}
  J_{\infty}(x_0) =  \sum_{k=0}^{\infty} \gamma^k.{g(x_k,u_k)}
\end{equation}
where \(\gamma \in (0,1)\) and \( \gamma^k g(x_k,u_k)\) is the stage cost at time step $k$. We further assume that $u_k \in U = \{a_0, a_1, a_k,...a_{M-1}\}$ and $U$ is a finite set containing $M$ different action values. Given a state \begin{math}x \in X\end{math}, our goal is to compute the infinite sequence of actions \begin{math} \{u\}_{k=0}^{\infty}\end{math} that minimizes the infinite horizon cost defined in eq(2). The standard approach here is to define a stationary policy function \begin{math}\pi(x)\end{math} which is a mapping from the state space to action space, and compute the optimal policy function by applying Bellman operators. Contrary to this, we proceed to directly define an optimization problem over the infinite sequence of actions, and minimize the infinite horizon cost without the construction of an explicit policy function.

\section{Theory}
In this section, we introduce the theory behind the Score-life programming approach. This section is organized into three parts. First, we explain how infinite horizon action sequences can be represented as real numbers in a bounded interval. Next, we show how the real numbers representing action sequences can be computed from a deterministic policy function. Finally, we introduce the Score-life function, which is a mapping from the bounded interval representing action sequences to infinite horizon cost corresponding to different action sequences. We derive important properties of the Score-life function and explain how the Score-life functions of different states are interrelated. 
\subsection{Representing action sequences as real numbers}

Consider an agent living in an environment with dynamics \begin{math}x_{k+1} = f(x_k)\end{math} taking a sequence of actions \begin{math} \{u_k\}_{k=0}^{\infty}\end{math} from the action set \begin{math}U\end{math}. Each infinite action sequence forms an action trajectory \begin{math}A_x = \{u_0\}\{u_1\}\{u_2\}.....\{u_{\infty}\}\end{math}. Let \begin{math}A\end{math} be the set of all possible infinite horizon action trajectories. For any action set \begin{math}U\end{math} with \begin{math}|U| > 1\end{math}, the set \begin{math}A\end{math} is uncountably infinite. The set of all possible action sequences does not have an implicit structure, hence we define some structure on this set for formulating an optimization problem directly over action sequences. Specifically, we equip the set \begin{math}A\end{math} with a topology \begin{math}\mathcal{O}\end{math} by defining the following encoding scheme. We map action values \begin{math}a_k\end{math} to binary digit sequence of length \begin{math}log(M)\end{math} where \begin{math}M\end{math} is the cardinality of the action set \begin{math}U\end{math}. Specifically, we define a surjective map \begin{math}\kappa\end{math} from the action set \begin{math}U\end{math} to the set of binary sequences of length \begin{math}log(M)\end{math}.\\
For instance, consider an agent with action set \begin{math}U\end{math} to the set of binary sequences \begin{math}U = \{-10,10\}\end{math}. In this case, we can map \begin{math}a_0 = -10 \end{math} to 0 and \begin{math}a_1 = 10\end{math} to 1.
\[\kappa(-10) \rightarrow 0\]
\[\kappa(10) \rightarrow 1\]
Now, an infinite sequence of actions can be mapped to an infinite binary sequence, with \begin{math}k^{th}\end{math} element of the binary sequence mapping to action taken at timestep \begin{math}k\end{math}. The binary sequence can then be mapped onto a bounded one-dimensional interval.

The infinite sequence of actions taken by an agent be called 'life' and the real number corresponding to to the infinite sequences be denoted by $l$.\\
When $|U| = 2$, the life value is given by:
\begin{equation}
    l = \sum_{i=0}^{\infty}2^{-i-1}.\kappa(u_i) 
\end{equation}
Where \begin{math}u_i\in \end{math} denotes control input at time step \begin{math}i \end{math}.
For general case where $|U| = M$, the life value is given by:
\begin{equation}
    l = \sum_{i=0}^{\infty}2^{logM.(-i-1)}.\kappa(u_i)
\end{equation}
Representing action sequences in this form has several advantages. First, for any finite \begin{math} |U|\end{math}, infinite action sequence of the agent can be mapped to a bounded interval between 0 and 1. Secondly, we can derive recursive relations that relates action sequences taken from different states. Specifically, let \begin{math}l_0 \end{math} be the life value of the action sequence taken by an agent at state \begin{math}x_0\end{math}. At time step 1, the agent reaches state \begin{math}x_1 = f(x_0,u_0) \end{math}. Let \begin{math}l_1\in \end{math} be the infinite horizon action sequence taken by the agent in state \begin{math}x_1 \end{math}. Now, \begin{math}l_1 \end{math} can be obtained by performing a multi-bit shift operation on \begin{math}l_0\in \end{math}. Mathematically, the both life values are related by the following equation: 
\[l_1 = \{2^{logM}l_0\}\]
where \begin{math} \{\} \end{math} is the fractional part function. See Appendix A for derivation. 
\subsection{Computation of life values for deterministic policies}
In this section, we explain how life values can be derived from deterministic policy functions. Our insight here is that life values follow a linear recursive relation, and hence they can be computed by solving a set of linear equations. Consider an agent following a deterministic policy $\pi$ and let the environment dynamics be \begin{math}
    x_{k+1} = f(x_k,\pi(x_k))
\end{math}. Let the cardinality of action space be equal to \begin{math}
    M
\end{math}. 
Consider the state \begin{math}
    x_0
\end{math}. The life value corresponding to the infinite action sequence starting from the \begin{math}
    x_0
\end{math} is given by:
\begin{equation}
l_{\pi}(x_0) = 2^{-log M}.\kappa(\pi(x_0)) + 2^{-2 logM}. \kappa(\pi(x_1)) + ..
\end{equation}
Eq(21) can be written in a recursive form as shown below:
\begin{equation}
l_{\pi}(x_0) = 2^{-log M}. \kappa(\pi(x_0)) +  2^{-log M}.l_{\pi}(f(x_0,\pi(x_0))
\end{equation}
For a finite state space with \begin{math}
    N
\end{math} number of states, we obtain \begin{math}
    N
\end{math} linear equations corresponding to life-values at \begin{math}
    N
\end{math} different states. Specifically, they can be written down as:
\begin{equation}
l_{\pi}(x_0) = 2^{-log M}. \kappa(\pi(x_0)) +  2^{-log M}.l_{\pi}(f(x_0,\pi(x_0))
\end{equation}
\begin{equation}
l_{\pi}(x_1) = 2^{-log M}. \kappa(\pi(x_1)) +  2^{-log M}.l_{\pi}(f(x_1,\pi(x_1))
\end{equation}

\begin{equation}
l_{\pi}(x_k) = 2^{-log M}. \kappa(\pi(x_k)) +  2^{-log M}.l_{\pi}(f(x_k,\pi(x_k))
\end{equation}

\begin{equation}
l_{\pi}(x_{N_s}) = 2^{-log M}. \kappa(\pi(x_{N_s})) +  2^{-log M}.l_{\pi}(f(x_{N_s},\pi(x_{N_s}))
\end{equation}
The set of linear equations can be expressed in matrix format as follows:

\begin{equation}
L_{\pi} = A.L_{\pi} + C_{\pi}
\end{equation}

where,
\begin{equation}
  L_{\pi} = [l_{\pi}(x_0), l_{\pi}(x_1), ....l_{\pi}(x_{N_s})]^T
\end{equation}
\begin{equation}
  C_{\pi} = (2^{-log M})[\kappa(\pi(x_0)), \kappa(\pi(x_1)), ....\kappa(\pi(x_{N_s}))]^T
\end{equation}
 and \begin{equation}
  A_{ij} = 2^{-log M} 
\end{equation} if \begin{math}
    x_j = f(x_i,\pi(x_i))
\end{math}.
 Else, $A_{ij}   = 0$
If \begin{math}
    (I - A)^{-1}
\end{math} exists, then the set of linear equations can be solved by computing: 
\begin{equation}
L_{\pi} = (I - A)^{-1}C_{\pi}
\end{equation} 
It is worth noting that this approach bears a strong resemblance to the computation of value functions through the Bellman update. This similarity arises from the fact that both the value function and the life values adhere to a recursive relationship.
\subsection{Infinite Horizon Cost and the Score-Life function}

Consider an agent that takes an infinite sequence of actions \begin{math}\{u_k\}_0^{\infty}\end{math} in an environment with dynamics \begin{math}x_{k+1} = f(x_k,u_k)\end{math}. Let the corresponding infinite horizon state sequence be \begin{math}\{x_k\}_0^{\infty}\end{math}.\\

The infinite horizon cost for the agent is given by:
\begin{equation}
  J_{\infty}(x_0) =  \sum_{k=0}^{\infty} \gamma^k.{g(x_k,u_k)}
\end{equation}
For a system with deterministic dynamics, the infinite state sequence  \begin{math}\{x_k\}_0^{\infty}\end{math} is directly dependent on the infinite action sequence \begin{math}\{u_k\}_0^{\infty}\end{math}. The infinite action sequence can be mapped onto a particular life value \begin{math}l\in [0,1)\end{math}.
\begin{equation}
  \{u_k\}_0^{\infty} \longrightarrow l  
\end{equation}
\begin{equation}
  S(l,x_0) = J_{\infty}(x_0) =  \sum_{k=0}^{\infty} \gamma^k.{g(x_k,u_k)}
\end{equation}
Now we can define a new function called 'Score-life' function, which maps the life value 'l' to its corresponding infinite horizon cost \begin{math}J_{\infty}(x_0)\end{math}.
For a fixed initial state $x_0$, the Score-life function: \begin{math}S(l.x_0):l \in [0,1) \rightarrow{S(l,x_0)}\end{math} maps infinite action sequences taken from state $x_0$ to respective infinite horizon cost values. We now introduce Theorem 1, which gives a recursive relationship between Score-life functions of different states. See Appendix A.1 for the proof.\\
\begin{theorem}
Let the dynamics of the environment satisfy the equation: \begin{math} x_{k+1} = f(x_k,u_k)\end{math}, where \begin{math}u_k \in U,x_k \in X, |U| = M  \end{math}. let the infinite horizon cost be given by \begin{math} J_{\infty}(x_0) =  \sum_{k=0}^{\infty} \gamma^k.{g(x_k,u_k)} \end{math}, and let \begin{math} \kappa\end{math} be the mapping from set \begin{math}U\end{math} to binary digit sequences. Then the Score-life function of the dynamical system obeys the following recursive equation:\\
\[S(l,x) = g(x,\kappa^{-1} (\lfloor 2^{log M}l\rfloor) ) + \gamma S(\{2^{logM}l\},f(x, \kappa^{-1} (\lfloor 2^{log M}l\rfloor))) \]

\end{theorem}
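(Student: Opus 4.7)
The plan is to unfold the definition of the Score-life function and split the infinite sum into its first stage cost and a tail, then identify the tail with the Score-life function evaluated at the shifted life value and the next state. Starting from
\[
S(l,x_0) \;=\; \sum_{k=0}^{\infty}\gamma^{k}\,g(x_k,u_k),
\]
I would peel off the $k=0$ term to obtain $S(l,x_0)=g(x_0,u_0)+\gamma\sum_{k=0}^{\infty}\gamma^{k}g(x_{k+1},u_{k+1})$, and then recognize the tail sum as the infinite horizon cost starting from state $x_1=f(x_0,u_0)$ under the shifted action sequence $\{u_{k+1}\}_{k=0}^{\infty}$. By definition of $S$, this tail equals $S(l_1,x_1)$, where $l_1$ is the life value of the shifted sequence.

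The core of the argument is then to identify the initial action $u_0$ and the shifted life value $l_1$ explicitly in terms of $l$ using the encoding described in Section 3.1. For this, I would use the base-$M$ (equivalently, binary chunks of length $\log M$) expansion coming from equation (4): the first $\log M$ binary digits of $l$ form the block $\kappa(u_0)$, interpreted as an integer between $0$ and $M-1$. Multiplying $l$ by $2^{\log M}=M$ shifts this block past the radix point, so $\lfloor 2^{\log M} l \rfloor = \kappa(u_0)$, which inverts via $\kappa^{-1}$ to yield $u_0 = \kappa^{-1}(\lfloor 2^{\log M} l \rfloor)$. The fractional part $\{2^{\log M} l\}$ then consists of precisely the remaining blocks, which by the construction in equation (4) equals the life value of $\{u_{k+1}\}_{k=0}^{\infty}$, i.e., $l_1=\{2^{\log M}l\}$. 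This multi-bit shift identity is stated in the paragraph preceding Section 3.2 and is derived in Appendix A, so I can invoke it directly.

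Substituting $u_0$, $x_1=f(x_0,\kappa^{-1}(\lfloor 2^{\log M} l \rfloor))$, and $l_1=\{2^{\log M}l\}$ into the decomposition yields the claimed recursion. Since $x_0$ was an arbitrary state, relabelling $x_0$ as $x$ gives the theorem.

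The main obstacle is the encoding identification step: one has to argue carefully that $\lfloor 2^{\log M} l \rfloor$ is a well-defined integer in $\{0,1,\dots,M-1\}$ recovering exactly $\kappa(u_0)$, and that the fractional part really does produce the correctly shifted life value. This hinges on the uniqueness of the binary-chunk expansion in equation (4), which must be handled with care at dyadic boundaries (life values admitting two distinct binary representations); a standard convention such as always choosing the non-terminating expansion, or restricting $\kappa$ so that the ambiguous set has measure zero, suffices to make the map $l\mapsto(u_0,l_1)$ well defined. Once this point is pinned down, the rest of the proof reduces to a one-line re-indexing of the geometric series.
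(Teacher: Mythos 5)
Your proposal follows essentially the same route as the paper's proof in Appendix A.1: peel off the first stage cost, identify the tail sum with $S(l',x_1)$ for $x_1=f(x_0,u_0)$, and use the shift identities $\lfloor 2^{\log M}l\rfloor=\kappa(u_0)$ and $\{2^{\log M}l\}=l'$ (which the paper justifies via Proposition A.1) to obtain the recursion. Your extra care about dyadic boundaries (the all-maximal-symbol tail giving $l'=1$) is a refinement the paper glosses over, but it does not alter the structure of the argument.
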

\begin{figure}[ht]
\begin{center}
\includegraphics[scale=0.3]{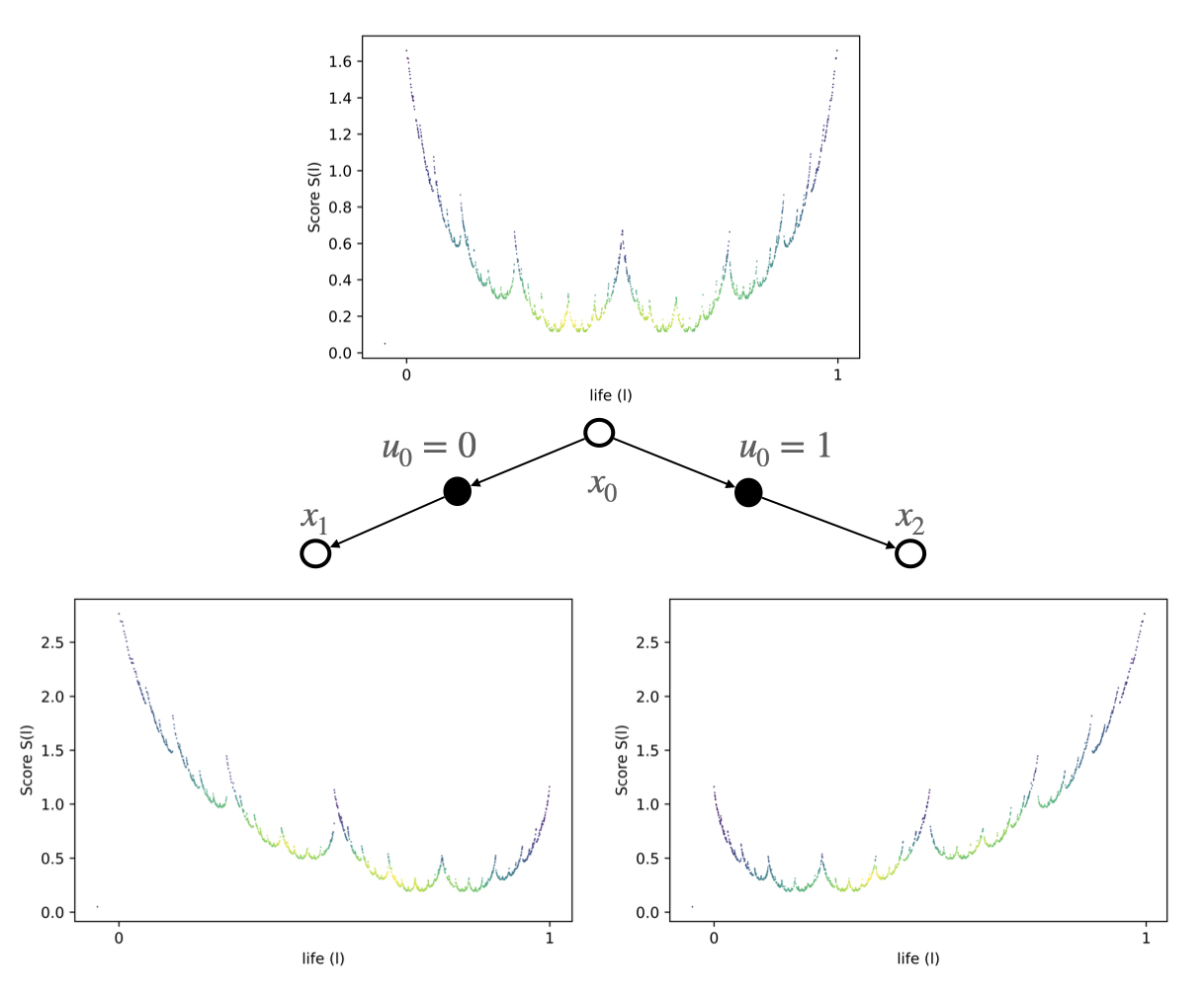}
\end{center}
\caption{Score life function of the origin state \begin{math}\textbf{x} = [x,\dot{x},\theta,\dot{\theta}] = [0,0,0,0]\end{math} of the carpole dynamical system (top) and Score-life functions of neighbouring states (bottom left and bottom right). The stage cost function is equal to \begin{math}
    x^TQx
\end{math} and \begin{math}
    \gamma = 0.5
\end{math}. }
\centering
\end{figure}
Theorem 1 tells us that the Score-life function of a given state is given by the superposition of Score-life function of neighbouring states. This implies that the Score-life function is a fractal function (See Fig 1), and the Score-life function of a single state contains information about Score-life functions of all the other states in the state-space.
The Score-life function for all states can be computed by applying the recursive update equation in Theorem 1.

\begin{algorithm}[H]
\caption{Recursive Computation of Score-life Value Function}
\SetAlgoLined

\textbf{Input: }
 Dynamics Model \begin{math}x_{k+1} = f(x_k,u_k)\end{math}, Finite region of state space \begin{math}X_f \subseteq X \end{math}\\
   \textbf{Result: }
 Score-life value function \begin{math}S(l,x)\end{math} for all states \begin{math}x \in X_f\end{math}\\
 
\textbf{Initialize: } \begin{math}S(l,x) = 0\end{math}, \begin{math}\forall x \in X_f\end{math}, \begin{math}l \in[0,1)\end{math}\\
\While{True}{
\For{all \begin{math}x \in X_f\end{math}}                    
{
\For{\begin{math}l \gets 0\end{math} to \begin{math} 1\end{math}}  
{
\begin{math} S(l,x) = g(x,\kappa^{-1} (\lfloor 2^{log M}l\rfloor) ) + \gamma S(\{2^{logM}l\},f(x, \kappa^{-1} (\lfloor 2^{log M}l\rfloor)))
\end{math}
}
}
}
\end{algorithm}
Once the exact Score-life function is computed, optimal Infinite Horizon Cost for a state $x$ can be computed by computing the minimum value of Score life function corresponding to the state $x$.
\begin{equation}
    J^*(x) = Min_{l\in[0,1]}(S(l,x))
\end{equation}
The optimal life value, $l^*(x)$ encodes the optimal infinite horizon action sequence starting from state $x$. The set of bits at the starting of $l_x^*$ corresponds to optimal action to be taken at state $x$. The optimal policy, $\pi^*(x)$ can hence be computed by extracting initial bits of the optimal life value $l$.
\begin{equation}
    \pi^*(x) = \kappa^{-1}(\lfloor 2^{logM} l^*(x) \rfloor)
\end{equation}
Unlike the value function where there is a single real number for every state, the Score-life function is a bounded unique function, for every state. Due to the recursive nature of Theorem 1, the Score-life function of a single state is a weighted superposition of Score-life function of all the other states. Hence a Score-life function of a single state contains information about all the other states. For instance, consider the Score-life function of the origin state of the cartpole dynamical system (Fig 1). The Score-life function of the origin state $x_0$ is combination of the Score-life function of two neighbouring states $x_1$ and $x_2$, where $x_1 = f(x_0,u_0)$ and $x_2 = f(x_0,u_1)$. The Score-life function of states $x_1$ and $x_2$, again is a combination of the Score-life function of the neighbouring states and so on. Due to this property, a fractal pattern emerges while computing Score-life function of the cartpole dynamical system. Another way of interpreting this result is that the Score-life functions of different states are related by amplitude and phase difference. We now introduce Theorem 2, which relates Score-life function of state \begin{math} x_0\end{math} and any state \begin{math} x_N\end{math}. See Appendix A.2 for the proof. 
\begin{theorem}
Let the dynamics of the environment satisfy the equation: \begin{math} x_{k+1} = f(x_k,u_k)\end{math}, where \begin{math}u_k \in U,x_k \in X, |U| = M  \end{math}. Let \begin{math}
    S(l,x_0)
\end{math} be the Score-life function of state \begin{math}
    x_0
\end{math}. Let \begin{math}
    \{u_0\} \{u_1\} \{u_2\}.... \{u_N\}
\end{math} be the action trajectory taken from state \begin{math}
    x_0
\end{math} and let \begin{math}
    \{x_0\} \{x_1\} \{x_2\}.... \{x_N\}
\end{math} be the corresponding state trajectory. Let \begin{math}
    \kappa 
\end{math} be a surjective mapping from action set to the set of binary sequences of length \begin{math}
    log(M)
\end{math}. Then the Score-life function of state \begin{math}
    x_N
\end{math} is given by: \\
\[S(l,x_N) =  \frac{1}{ \gamma^N} (S({\frac{l}{2^{N log M}} + \phi_N},x_0) - \psi_N ) \]
where
\[\phi_N = \sum_{i=0}^{N-1}2^{(N-i-1)log(M)}\kappa(u_i)  \]
and 
\[\psi_N = \sum_{i=0}^{N-1}\gamma^i g(x_i,u_i)  \]
\end{theorem}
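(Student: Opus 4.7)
The plan is to induct on $N$, using Theorem~1 as the one-step unwinding rule. The base case $N=0$ is immediate: $\phi_0$ and $\psi_0$ are empty sums equal to $0$, the factor $1/\gamma^N$ equals $1$, and $l/2^{N\log M}=l$, so the claimed identity collapses to $S(l,x_0)=S(l,x_0)$.

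For the inductive step, assume the identity holds for some $N\ge 0$, and apply Theorem~1 to the state $x_N$, evaluated at the life value $l/2^{\log M}+\kappa(u_N)/2^{\log M}$ (the life value at $x_N$ whose first encoded action is $u_N$ and whose tail encodes the sequence with life value $l$ at $x_{N+1}$). Theorem~1 yields
\[
S\!\left(\tfrac{l}{2^{\log M}}+\tfrac{\kappa(u_N)}{2^{\log M}},\,x_N\right)
= g(x_N,u_N)+\gamma\,S(l,x_{N+1}).
\]
Solving for $S(l,x_{N+1})$ and substituting the inductive hypothesis into the left-hand side (with the input life value shifted by $\kappa(u_N)/2^{\log M}$) gives, after collecting the $\gamma$-discount factor into $1/\gamma^{N+1}$, an expression of the form
\[
S(l,x_{N+1})=\frac{1}{\gamma^{N+1}}\!\left(S(l/2^{(N+1)\log M}+\phi_{N+1},\,x_0)-\psi_{N+1}\right).
\]
The verification then reduces to two algebraic identities: (i) $\phi_{N+1}=\phi_N+\kappa(u_N)/2^{(N+1)\log M}$ (absorbing the newly added action into the offset), and (ii) $\psi_{N+1}=\psi_N+\gamma^N g(x_N,u_N)$ (absorbing the newly added stage cost). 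Both follow directly from the telescoping definitions of $\phi_N$ and $\psi_N$ as partial sums.

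A conceptually cleaner alternative, which I would present in parallel, is a direct concatenation argument. Any infinite action sequence from $x_N$ with life value $l$ can be prepended with the prefix $u_0 u_1 \cdots u_{N-1}$ to obtain an infinite action sequence from $x_0$ whose life value is $\phi_N + l/2^{N\log M}$, by the base-$M$ expansion in equation~(4). Splitting the infinite-horizon cost $S(\phi_N+l/2^{N\log M},x_0)=\sum_{k=0}^{\infty}\gamma^k g(x_k,u_k)$ at index $N$ gives $\psi_N+\gamma^N S(l,x_N)$, from which the stated formula follows by rearrangement.

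The main obstacle is purely bookkeeping: keeping the powers of $2^{\log M}=M$ straight and showing that the prefix-shift $l\mapsto\phi_N+l/2^{N\log M}$ on life values exactly mirrors the action-sequence concatenation (this is the multi-bit analogue of the single-step shift $l_1=\{2^{\log M}l_0\}$ derived in Appendix~A). Once that correspondence is established, the splitting of the geometric sum and the induction are routine, and no contraction or monotonicity hypothesis on the Bellman operator is invoked.
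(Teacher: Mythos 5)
Your proposal is correct, and in fact it contains two arguments: your ``conceptually cleaner alternative'' (prepend the prefix $u_0\cdots u_{N-1}$, note that the life value transforms as $l\mapsto \phi_N+l/2^{N\log M}$, split $\sum_{k\ge 0}\gamma^k g(x_k,u_k)$ at index $N$ into $\psi_N+\gamma^N S(l,x_N)$, and rearrange) is essentially word-for-word the paper's proof, which starts from $S(l,x_0)$, identifies the tail sum with $S(l_N,x_N)$ for $l_N=\sum_{i\ge 0}2^{(-i-1)\log M}\kappa(u_{i+N})$, derives $l_N=2^{N\log M}(l-\phi_N)$ from the base-$M$ expansion, and changes variables. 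Your primary route --- induction on $N$ with Theorem~1 as the one-step unwinding, verified via the recurrences $\phi_{N+1}=\phi_N+\kappa(u_N)/2^{(N+1)\log M}$ and $\psi_{N+1}=\psi_N+\gamma^N g(x_N,u_N)$ --- is a genuinely different organization: it trades the single splitting of the infinite sum for $N$ applications of the already-proved one-step recursion, which makes the result an immediate corollary of Theorem~1 and localizes all bookkeeping in the two telescoping identities, at the cost of being slightly less self-contained (it inherits whatever care Theorem~1 took with $\lfloor 2^{\log M}l'\rfloor=\kappa(u_N)$ and $\{2^{\log M}l'\}=l$, which your choice of evaluation point $l'=(\kappa(u_N)+l)/2^{\log M}$ handles correctly since $l\in[0,1)$). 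One point worth making explicit: the $\phi_N$ you use, $\sum_{i=0}^{N-1}2^{(-i-1)\log M}\kappa(u_i)$, is the one the paper's own proof uses and is the one for which $l/2^{N\log M}+\phi_N\in[0,1)$; the exponent $2^{(N-i-1)\log M}$ appearing in the theorem statement is inconsistent with that proof (it differs by a factor $2^{N\log M}$), so your silent normalization is the right one, but you should state it rather than leave the discrepancy implicit.
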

Theorem 2 tells us that the Score-life functions of different states \begin{math}
    x_N
\end{math} and \begin{math}
    x_0
\end{math} are related by coordinated transformations. We can apply this result to design algorithms for efficient computation of optimal action sequences. Specifically, once we compute the Score-life function of a single state \begin{math}
    x_0
\end{math}, then for any different state \begin{math}
    x_N
\end{math}, we only have to compute the parameters \begin{math}
    N,\psi_N,\phi_N
\end{math} for estimating the Score-life function at state \begin{math}
    x_N
\end{math}. 

\section{Representation and computation
of the Score-life function}

The Score-life function need not necessarily be a polynomial or smooth function. In general, the Score-life function need not be continuous as well. Continuous Score-life functions are often fractal functions \cite{lagarias2011takagi,barnsley1986fractal}, which can be represented using Faber Schauder basis functions. In the next section, we discuss exact methods for computing coefficients of the Faber Schauder basis functions. The exact representation of Score-life function is expensive to compute and optimize, hence we also discuss approximate methods wherein a polynomial approximation of the Score-life function is used instead of the exact fractal function. \footnote{Code for this work is available at \url{https://github.com/Abhinav-Muraleedharan/Beyond_Dynamic_Programming.git}.}
\subsection{Exact Methods}
The Score-life function can be written in the Faber Schauder basis \cite{Faber1910,J1927} as:
\[S(l,x) = \alpha_0(x) + \alpha_1(x)l + \sum_{j=0}^{\infty} \sum_{i=0}^{2^j - 1} \alpha_{ij}(x) e_{ij}(l)\]
where \begin{math} e_{ij}(l) \end{math} are the Faber Schauder basis functions given by:
\[  e_{i,j}(l) = 2^j(|l - \frac{i}{2^j}| + |l -\frac{i+1}{2^j}| - |2l - \frac{2i + 1}{2^{j}}|) \] These functions form a Schauder basis of the Banach space \begin{math}
    C^0([0,1])
\end{math}, in the sup norm. For a detailed discussion of Faber Schauder bases, see \cite{megginson2012introduction}.
The coefficients of the faber schauder basis representation can be computed by the following set of equations.
\begin{equation}
    \alpha_0(x) = S(l=0,x)
\end{equation}
\begin{equation}
    \alpha_1(x) =S(l=1,x) - S(l=0,x)
\end{equation}
\begin{equation}
    \alpha_{ij}(x) = S(l=\frac{2i + 1}{2^{j+1}},x) - \frac{1}{2}(S(l=\frac{i}{2^j},x)+ S(l = \frac{i+1}{2^j},x)) 
\end{equation}

\begin{figure}[ht]
\begin{center}
\includegraphics[scale=0.5]{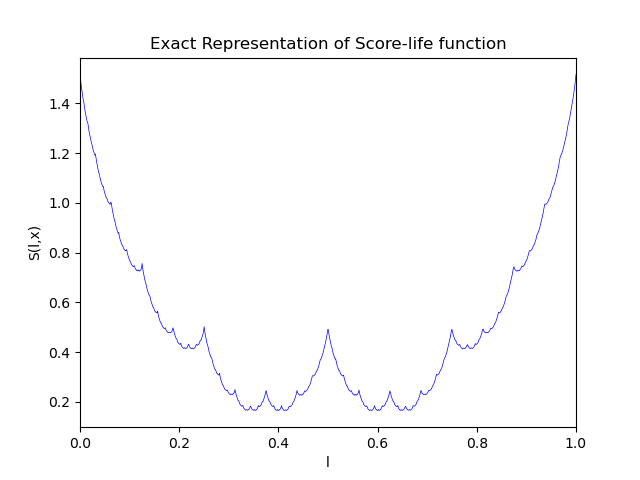}
\end{center}
\caption{ Exact representation of the Score life function of the origin state \begin{math}\textbf{x} = [x,\dot{x},\theta,\dot{\theta}] = [0,0,0,0]\end{math} of carpole dynamical system using computed Faber Schauder coefficients.}
\centering
\end{figure}
For a given state \begin{math}
    x
\end{math}, the coefficients of the Faber Schauder expansion of the function can be computed from eq(10-12). Evaluation of Score-life function at specific values involves computing an approximate value to the infinite horizon cost corresponding to the action sequence. Note that computation of Faber-Schauder coefficients involves exponential number of queries to the Score-life function, and hence it is computationally expensive. It is infeasible to compute Score-life function of every state using eq(10-12). So, we first compute the Faber-Schauder coefficients of state
\begin{math}
    x_0
\end{math} and then for any given state \begin{math}
    x_N
\end{math}, we compute the coefficients by applying Theorem 2. Specifically, the Faber Schauder representation of Score life function of state \begin{math}
    x_N
\end{math} can be written as:
\begin{equation}
 S(l,x_N) =  \frac{1}{ \gamma^N} (S({\frac{l}{2^{N log M}} + \phi_N},x_0) - \psi_N )
\end{equation}
After substituting the Faber Schauder expansion of Score-life function of state \begin{math} x_)\end{math}, eq(13) becomes: 
\begin{equation}
    S(l,x_N) = \frac{1}{\gamma^N} (\alpha_0(x_0) - \psi_N + \alpha_1(x_0)(\frac{l}{2^{N log(M)}}+\phi_N) + \sum_{j=0}^{\infty} \sum_{i=0}^{2^j - 1} \alpha_{ij}(x_0) e_{ij}(\frac{l}{2^{N log(M)}} +\phi_N))
\end{equation}
For any given state \begin{math}
    x_N
\end{math}, the values of \begin{math}
    \phi_N
\end{math},\begin{math}
    N
\end{math},\begin{math}
    \psi_N
\end{math} are unknowns. However, we can compute the values of these unknown variables by solving a set of nonlinear equations. Specifically, we can approximately evaluate the Score-life function at a finite number of values (minimum 3), and compute the values of 3 unknowns \begin{math}
    \phi_N
\end{math}, \begin{math}
    N
\end{math},\begin{math}
   \psi_N
\end{math}. After computing these unknown values, the optimal infinite action sequence from state \begin{math}
    x_N
\end{math} can be computed by computing the minimum of Score-life function of \begin{math}
    x_N
\end{math}.
For a detailed description of algorithm and pseudocode, refer Appendix B.1.
\begin{equation}
    l^*(x_N) = argmin_{l\in [0,1)} S(l,x_N)
\end{equation}
The optimal cost-to go is given by:
\begin{equation}
    J^*(x_N) = Min_{l\in [0,1)} S(l,x_N) = S(l^*(x_N),x_N)
\end{equation}
The interesting part of the Score-life programming method is that the optimal cost to go from a state can be computed directly from monte carlo rollouts, and we do not have to iterate through all states in the state space. Furthermore, the optimal infinite action sequence can be directly computed, without needing a policy function. Although this approach is theoretically valid, in practice, solving eq(14) and computing the minima of fractal Score-life function is computationally expensive. A more practical algorithm involves approximations to the Score-life function, which is discussed in the next section.
\subsection{Approximate Methods}
Instead of using exact fractal representation of the Score-life function, we can approximate the original Score-life function using a polynomial function that matches with the exact Score-life function at a finite number of points. Let the polynomial Score-life function be denoted by \begin{math}
    S_{poly}(l,x)
\end{math}. For a polynomial representation of the Score-life function, we require \begin{math}
    ||S_{poly}(l,x) - S(l,x)||_2 < \epsilon 
\end{math} for some \begin{math}
    \epsilon > 0
\end{math} and \begin{math}
    \forall l \in [0,1)
\end{math}.
Mathematically, the polynomial Score-life function can be written as:
\begin{equation}
    S_{poly}(l,x) = \sum_{i=0}^n a_i(x) l^i
\end{equation}
For a given state \begin{math}
    x
\end{math}, the coefficients of the polynomial function \begin{math}
    a_i(x)
\end{math} are unknowns. We evaluate the coefficients in a similar manner to exact methods, first we compute the coefficients of the polynomial of a particular state \begin{math}
    x_0
\end{math}. This can be performed by evaluating the Score-life function at a finite sample of \begin{math}
    l
\end{math} values and solving a non-linear regression problem to compute the coefficients. After computation of coefficients for a specific state \begin{math}
    x_0
\end{math}, we apply Theorem 2 to compute the coefficients of the Score-life function for different states in the state space.

\begin{figure}[ht]
\begin{center}
\includegraphics[scale=0.6]{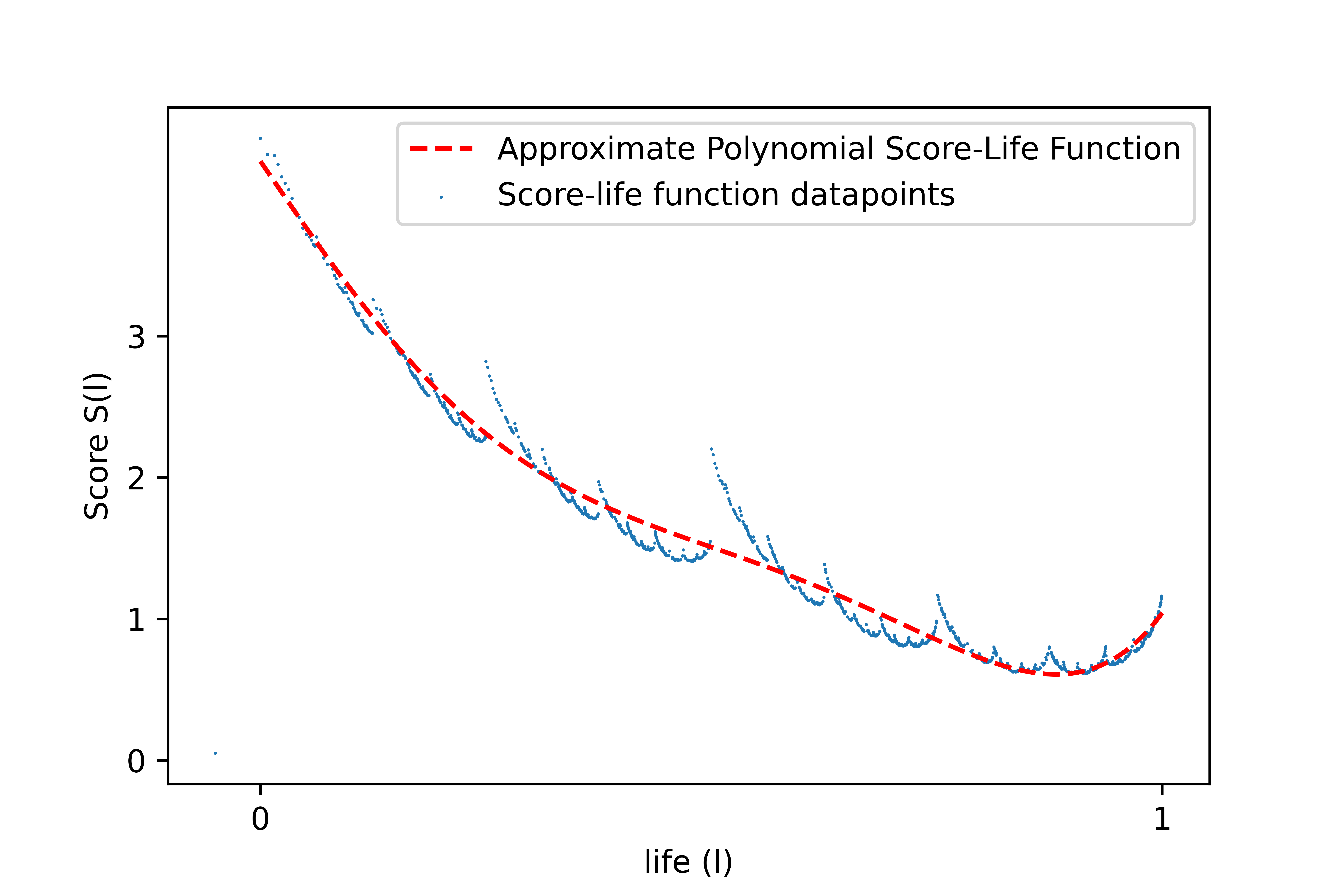}
\end{center}
\caption{ Approximate Polynomial function $S_{poly}(l,x)$ and the Score-Life Function $S(l,x)$ for the state $\textbf{x} =[x,\dot{x},\theta,\dot{\theta}]^T = [-0.0039, -0.3902,  0.0058,  0.5853]^T$ of the cartpole dynamical system. The actual Score-Life function is a fractal function, and the polynomial function is of degree 5. The minimum value of the polynomial approximation is close to the minimum value of the actual fractal Score-life function}
\centering
\end{figure}
The optimal infinite horizon cost from a given  state \begin{math} x \end{math} can then be computed by computing the minimum of the polynomial approximation of the Score-life function.
\begin{equation}
  J^*(x) \approx \min_{l\in[0,1)}S_{poly}(l,x)
\end{equation}
However, note that:
\begin{equation}
  l^*_{poly} \neq l^*
\end{equation}
 Although the minimum value of the polynomial function can be arbitrarily close to the minimum value of the actual Score-life function, the corresponding life value ($l$) need not be optimal. Hence, the polynomial approximation function cannot be used directly to compute the optimal infinite horizon action sequence from state $x$. The optimal action from any state $x$ can be computed by the Bellmann equation as follows:
\begin{equation}
\pi^*(x) = argmin_{a_i \in U}(g(x,a_i) + \min_{l\in[0,1]}\gamma S_{poly}(l,f(x,a_i))
\end{equation}
For a detailed description of algorithm and pseudocode, refer Appendix B.2.
\section{Simulation Results}
\noindent We applied approximate and exact methodstp solve the cart pole balancing task in OpenAI Gym platform \cite{brockman2016openai}.In our approach, we focused on a model-based deterministic scenario, where the agent had complete knowledge of the system dynamics and state. For exact methods, we optimized fractal functions to compute finite horizon action sequences. When the Score-life function is approximated with a polynomial function, we used the approximate Score-life function to compute the optimal cost-to-go of neighboring states and computed optimal action to minimize cumulative cost. We noticed that efficiently optimizing fractal functions is a hard computational problem, and most of the time the solver gets trapped in local minima. Hence we observe poor performance for exact methods in comparison to approximate methods. See Appendix C for detailed discussion and trajectory plots.  
\begin{figure}[htbp]
  \centering
  \begin{subfigure}[b]{0.45\textwidth}
    \includegraphics[width=\textwidth]{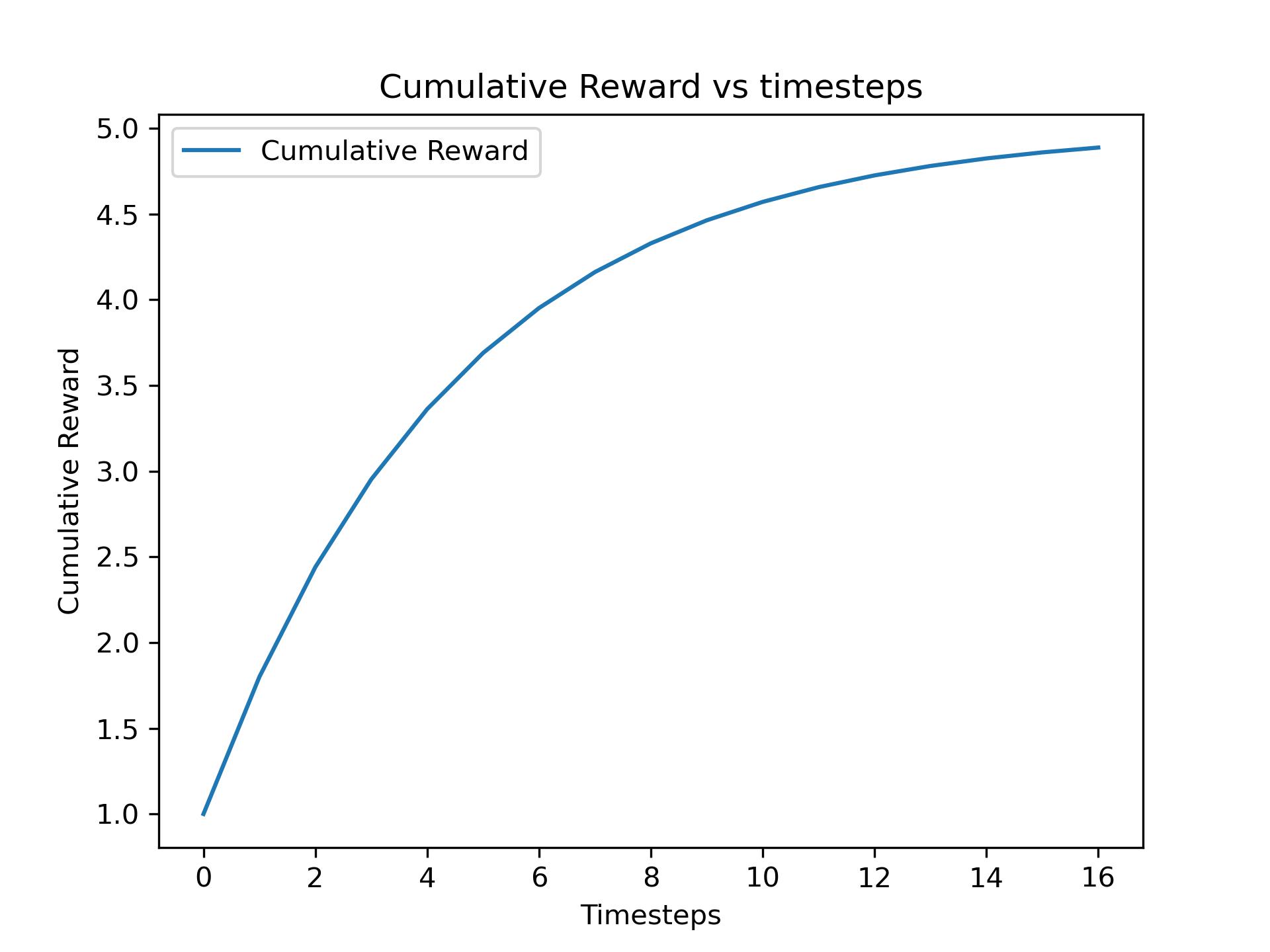}
    \caption{Cumulative Reward vs timesteps(Exact Method)}
    \label{fig:image1}
  \end{subfigure}
  \hfill
  \begin{subfigure}[b]{0.45\textwidth}
    \includegraphics[width=\textwidth]{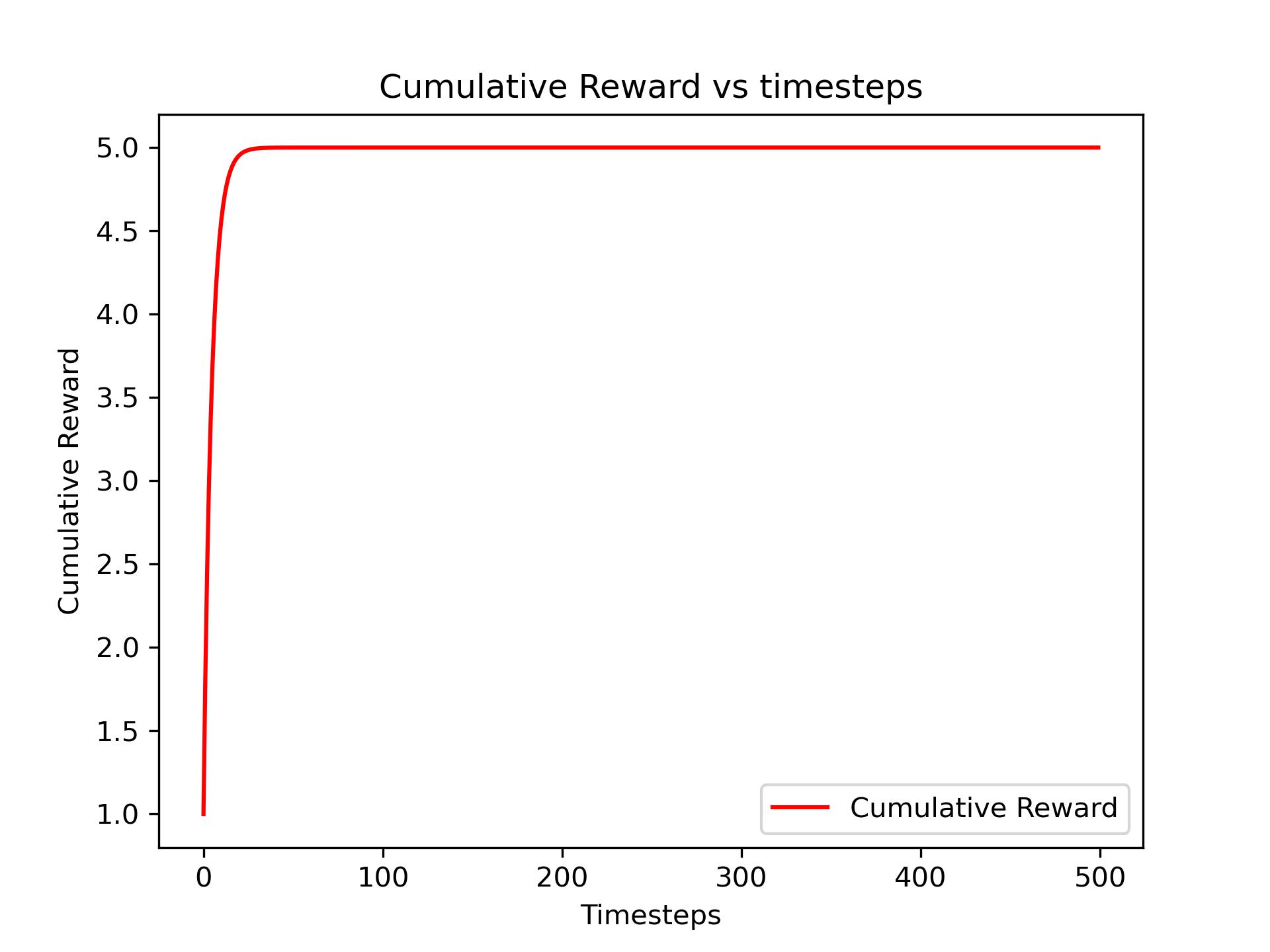}
    \caption{Cumulative Reward vs timesteps (Approximate Method)}
    \label{fig:image2}
  \end{subfigure}
  \caption{Cumulative Reward vs timesteps for cart pole environment. Figure (a) shows the results of the exact method and Figure (b) shows the results of the approximate method}
  \label{fig:side_by_side}
\end{figure}

\subsection{Simulation Setup}
\noindent The cartpole system available in openai gym environment was used for simulation. The agent has access to the position and velocity of the cart as state and can only go left or right for each action. The agent receives a reward of +1 for every timesteps and the episode terminates when the cart goes out of the boundaries or when the pole falls.
\subsection{Performance of Algorithms}
\noindent The approximate method was able to stabilize the cartpole for 500 timesteps while the exact method was only able to stabilize the cartpole for 16 timesteps. For a wide range of initial conditions, approximate methods outperformed exact methods. The computation of optimal action sequences via exact methods is computationally expensive. We used a gradient-based method for computing the minima of fractal functions. (See Algorithm 3). Computation of gradient of the fractal function require \begin{math}
    O(2^n)
\end{math} iterations, where \begin{math}
    n
\end{math} is the order of fractal function. On the other hand, in the case of approximate methods, for low-degree polynomials, the minima can be computed in closed form. We found that quadratic functions are sufficient for cart pole balancing. Since the minima of quadratic functions can be computed efficiently, approximate methods show superior performance in this task. 

\section{Conclusion and Future Work}
We presented Score-life programming, a novel theoretical approach in solving infinite horizon decision making problems in reinforcement learning. Our method can effectively search over non stationary policies and can compute infinite horizon action sequences directly from a given state input. This paper also laid the groundwork for a new class of no-policy reinforcement learning algorithms, and also showed the connection between number theoretic fractal functions and reinforcement learning problems. To conclude, our work contributes significant insights into the structure of dynamic programming methods and suggests novel methods to solve Dynamic Programming problems. In future work, we plan to apply the method to model free and stochastic reinforcement learning problems. 
\\

\section*{Acknowledgements}
I would like to thank Prof. Andrew Goldenberg, Prof. Florian Shkurti, and Prof. Prasanth Nair for helpful discussions.
\bibliographystyle{unsrt}  
\bibliography{references}  
\newpage
\appendix 
\section{Definitions and Results}
Any real number \begin{math}
    l
\end{math} in the interval \begin{math}
    [0,1)
\end{math} can be expressed in binary as:\[l = \sum_{i=0}^{\infty} 2^{-i}d_i\] where \begin{math}
    d_i \in \{0,1\}
\end{math}. The sequence of binary digits \begin{math}
    \{d_i\}_{i=0}^{\infty}
\end{math} can be mapped onto an action sequence taken by an agent, and hence, any action sequence can be mapped to a real number within the interval \begin{math}
    [0,1)
\end{math}. We construct this mapping by defining a function \begin{math}
    \kappa 
\end{math} which maps action values to binary sequences of length \begin{math}
    log M
\end{math}, where \begin{math}
    M 
\end{math} is the cardinality of the action set \begin{math}
    U
\end{math}. For instance, consider an agent with \begin{math}
    U = \{-10,-5,5,10\}
\end{math}. In this case, we can define a \begin{math}
    \kappa: U \rightarrow \mathbb{N}
\end{math}, \begin{math}
    \kappa (-10) = 00 = 0
\end{math},
\begin{math}
\kappa (-5) = 01 = 1,
\end{math}
\begin{math}
    \kappa(5) = 10 = 2
\end{math},
\begin{math}
    \kappa(10) = 11 = 3
\end{math}. The definition of \begin{math}
    \kappa 
\end{math} is arbitrary, and for an action set of cardinality \begin{math}
    M
\end{math}, there exists \begin{math}
    M!
\end{math} possible definitions for \begin{math}
    \kappa
\end{math}. Also, \begin{math}
    \kappa (u_i) \in \{0,1,2,...M-1\}
\end{math}. Now we define a real number \begin{math}
    l \in [0,1)
\end{math}, which is given by:
\[l =\sum_{i=0}^{\infty} 2^{logM(-i-1)} \kappa(u_i)\]

\begin{proposition}
\label{sec: Proposition A.1}
Let \begin{math}
    \{u_0\}\{u_1\}\{u_2\}...\{u_{\infty}\} 
\end{math} be the infinite horizon action sequence taken by an agent \begin{math}
    A
\end{math}, and let \begin{math}
    u_k \in U 
\end{math} where \begin{math}
    U
\end{math} is the set of all possible actions, and \begin{math}
    |U| = M
\end{math}. Then the life-value \begin{math}
    l =\sum_{i=0}^{\infty} 2^{logM(-i-1)}{\kappa(u_i)}
\end{math} lies in the interval [0,1).
 \end{proposition}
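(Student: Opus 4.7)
The plan is a direct bound via a geometric series. By construction $\kappa$ sends each action in $U$ to an integer in $\{0, 1, \ldots, M-1\}$ (via a length-$\log M$ binary sequence), so every term $2^{\log M(-i-1)}\kappa(u_i)$ of the series defining $l$ is a product of non-negative quantities. Hence the partial sums form a monotone non-decreasing sequence, from which $l \geq 0$ follows once convergence is established.

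For the upper bound, I would substitute the worst-case value $\kappa(u_i) \leq M-1$ into the series. Since $2^{\log M} = M$, this yields
\[
  l \;\leq\; (M-1)\sum_{i=0}^{\infty} M^{-(i+1)} \;=\; (M-1)\cdot\frac{1/M}{1 - 1/M} \;=\; 1,
\]
where the geometric series converges because its common ratio $1/M < 1$ (the nontrivial case $M \geq 2$ being implicit, since $|U|>1$ is what makes the construction meaningful). This simultaneously establishes absolute convergence of the series defining $l$ and the bound $l \leq 1$.

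The main subtlety, and the step I expect to be the only non-routine one, is promoting $l \leq 1$ to the strict inequality $l < 1$ stated in the proposition. Equality in the geometric bound occurs exactly when $\kappa(u_i) = M-1$ for every $i$, which is the base-$M$ analogue of the identity $0.\overline{9} = 1$: a non-canonical representation of the right endpoint. I would resolve this either by appealing to the standard convention that terminal all-$(M-1)$ sequences are identified with the next integer and thus placed in the equivalence class of $0$, or, more operationally, by noting that in the recursive life-value framework the shift $l \mapsto \{M l\}$ already collapses such sequences to $0$, so the interval can be taken half-open without loss of information. With this convention in place, combining non-negativity with the strict-upper-bound conclusion gives $l \in [0,1)$, finishing the proof.
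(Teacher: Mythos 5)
Your proof follows essentially the same route as the paper's: bound $\kappa(u_i)$ by $M-1$, sum the geometric series $\sum_{i=0}^{\infty} 2^{\log M(-i-1)} = \frac{1}{M-1}$ (using $2^{\log M}=M$), and conclude $0 \le l \le 1$. You are in fact more careful than the paper on the one delicate point: the paper computes $l_{max}=1$ for the all-$(M-1)$ sequence and then simply asserts $l\in[0,1)$, whereas you explicitly note that this endpoint sequence attains the value $1$ (the base-$M$ analogue of $0.\overline{9}=1$) and must be excluded or identified by a convention, which is what the strict inequality actually requires.
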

\begin{proof}
The minimum value of \begin{math}
    l
\end{math} is attained when \begin{math}
    \kappa(u_i) = 0
\end{math}, \begin{math}
    \forall i 
\end{math}. In this case,

\[
l = \kappa(u_i)\sum_{i=0}^{\infty} 2^{logM(-i-1)}
\]
Since \begin{math}
    \kappa(u_i) = 0
\end{math}, we get \begin{math}
    l_{min} = 0
\end{math}. 
The maximum value is attained when \begin{math}
    \kappa(u_i) 
\end{math} is maximized, and the same for all \begin{math}
    i
\end{math}. This happens when \begin{math}
    \kappa(u_i) = M-1
\end{math}. In this case, the life-value is given by:
\[
l_{max} = (M-1)\sum_{i=0}^{\infty} 2^{logM(-i-1)}
\]
The infinite sum is a geometric series and the limit of the sum converges to:
\[
\lim_{{N \to \infty}}\sum_{i=0}^{N} 2^{logM(-i-1)} \rightarrow \frac{2^{-log M}}{1 - 2^{-log M}}
\]
Since we are taking \begin{math}
    log
\end{math} w.r.t base 2, the expression can be simplified to:
\[
  \lim_{{N \to \infty}}  \sum_{i=0}^{N} 2^{logM(-i-1)} \rightarrow \frac{1}{M-1}
\]
Substituting the result in the expression for \begin{math}
    l_{max}
\end{math}, we get
\[
l_{max} \rightarrow 1
\] Since the upper limit of life value is \begin{math}
    1
\end{math}, and since the minimum value is \begin{math}
    0
\end{math}, the life-value \begin{math}
    l = \sum_{i=0}^{\infty} 2^{logM(-i-1)}{\kappa(u_i)}
\end{math} lies in the interval \begin{math}
    [0,1)
\end{math}.
\end{proof}
 
\subsection{Score-life function Iteration}
\label{sec: A.1}
In this section, we prove Theorem \ref{thm:1}, which relates the Score-life functions of neighboring states, dynamics equation, and stage cost function. 
\setcounter{theorem}{0}
\begin{theorem}\label{thm:1}
Let the dynamics of the environment satisfy the equation: \begin{math} x_{k+1} = f(x_k,u_k)\end{math}, where \begin{math}u_k \in U,x_k \in X, |U| = M  \end{math}. Let the infinite horizon cost be given by \begin{math} J_{\infty}(x_0) =  \sum_{k=0}^{\infty} \gamma^k.{g(x_k,u_k)} \end{math}, and let \begin{math} \kappa\end{math} be the mapping from set \begin{math}U\end{math} to binary digit sequences. Then the Score-life function of the dynamical system obeys the following recursive equation:\\
\[S(l,x) = g(x,\kappa^{-1} (\lfloor 2^{log M}l\rfloor) ) + \gamma S(\{2^{logM}l\},f(x, \kappa^{-1} (\lfloor 2^{log M}l\rfloor))) \]

\end{theorem}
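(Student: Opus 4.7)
The plan is to unpack $S(l,x)$ directly from its definition as an infinite horizon cost, peel off the first stage, and recognize that the remainder is once again a Score-life value evaluated at the successor state. The technical engine is the bijective correspondence between life values and infinite action sequences: if $l = \sum_{i=0}^{\infty} 2^{\log M (-i-1)} \kappa(u_i)$, then the head digit $\kappa(u_0)$ and the tail series encoding $\{u_1\}\{u_2\}\ldots$ can be cleanly separated by the multi-bit shift $l \mapsto 2^{\log M}\, l$.

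First I would extract the first action. Multiplying by $2^{\log M}$ gives
\[
2^{\log M} l \;=\; \kappa(u_0) \;+\; \sum_{i=1}^{\infty} 2^{\log M (-i)} \kappa(u_i).
\]
The leading term $\kappa(u_0)$ is a nonnegative integer in $\{0,1,\ldots,M-1\}$, while the tail sum lies in $[0,1)$ by the same geometric-series estimate used in Proposition~\ref{sec: Proposition A.1} (the tail is itself of the form of a life value). Hence $\lfloor 2^{\log M} l\rfloor = \kappa(u_0)$, so $u_0 = \kappa^{-1}\bigl(\lfloor 2^{\log M} l\rfloor\bigr)$, and the fractional part
\[
\{2^{\log M} l\} \;=\; \sum_{j=0}^{\infty} 2^{\log M (-j-1)} \kappa(u_{j+1})
\]
is precisely the life value encoding the shifted action sequence $\{u_1\}\{u_2\}\{u_3\}\ldots$.

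Next I would split the infinite horizon cost. By definition, $S(l,x) = \sum_{k=0}^{\infty} \gamma^{k} g(x_k,u_k)$ with $x_0 = x$ and $x_{k+1} = f(x_k,u_k)$. Separating the $k=0$ term and re-indexing the rest with $j = k-1$ gives
\[
S(l,x) \;=\; g(x, u_0) \;+\; \gamma \sum_{j=0}^{\infty} \gamma^{j}\, g(x_{j+1}, u_{j+1}).
\]
The remaining sum is exactly the Score-life value of the shifted action sequence started from state $x_1 = f(x, u_0)$. Substituting the two identifications from the previous paragraph (for $u_0$ and for the shifted life value) yields the claimed recursion.

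The main obstacle is largely careful bookkeeping rather than a conceptual hurdle: one must argue cleanly that after multiplication by $2^{\log M}$ the tail sum is strictly less than $1$, so that the floor and fractional-part operations behave as advertised, and that an index shift recovers the definition of a life value at the next state. Both points follow immediately from the geometric-series estimate already available from Proposition~\ref{sec: Proposition A.1}, so no machinery beyond what the paper has set up is required.
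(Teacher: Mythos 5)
Your proposal is correct and follows essentially the same route as the paper's own proof: peel off the first stage cost, identify the remaining discounted sum as the Score-life value of the successor state $f(x,u_0)$, and use the multi-bit shift $l \mapsto 2^{\log M} l$ together with the geometric-series bound from Proposition~\ref{sec: Proposition A.1} to read off $u_0 = \kappa^{-1}(\lfloor 2^{\log M} l\rfloor)$ and the shifted life value $\{2^{\log M} l\}$. No substantive differences from the paper's argument.
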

\begin{proof}
  For an infinite horizon action sequence \begin{math}
    \{u_0\}\{u_1\}\{u_2\}...\{u_{\infty}\} 
\end{math} taken by an agent, the life value is given by
\begin{math}
    l =\sum_{i=0}^{\infty} 2^{logM(-i-1)}{\kappa(u_i)}
\end{math}. The Score-life function for the state \begin{math}
    x_0
\end{math}can be written as:
\[
S(l,x_0) = \sum_{i=0}^{\infty} \gamma^i g(x_i,u_i)
\]
Now, we can write rearrange the terms in the summation as:
\begin{equation}
S(l,x_0) = g(x_0,u_0) + \gamma\sum_{i=1}^{\infty} \gamma^{i-1} g(x_i,u_i)
\end{equation}
But the infinite sum
\begin{math}
    \sum_{i=1}^{\infty} \gamma^{i-1} g(x_i,u_i)
\end{math} is equal to the Score-life function of state \begin{math}
    x_1
\end{math} evaluated at a different life-value \begin{math}
    l'
\end{math}. Specifically, 
\begin{equation}
 \sum_{i=1}^{\infty} \gamma^{i-1} g(x_i,u_i) = S(l',x_1)
\end{equation}
where \begin{math}
    x_1 = f(x_0,u_0)
\end{math} and \begin{math}
    l' = \sum_{i=1}^{\infty} 2^{logM(-i)}{\kappa(u_i)}
\end{math}. Now we can derive a relation between \begin{math}
    l
\end{math} and \begin{math}
    l'
\end{math}. Recall,
\[
    l =   \sum_{i=0}^{\infty} 2^{log M(-i-1)} \kappa(u_{i})
\]
We can rearrange this equation by writing the first term separately and taking out \begin{math}
    2^{-logM}
\end{math} from the infinite summation. 
\[
    l = 2^{-log M} \kappa(u_0) + 2^{-log M} \sum_{i=0}^{\infty} 2^{log M(-i-1)} \kappa(u_{i+1})
\]
Note that the second infinite term summation term is nothing but \begin{math}
    l'
\end{math}. Hence, we can write \begin{math}
    l
\end{math} as:
\[
l = 2^{-log M} \kappa(u_0) + 2^{-log M} l'
\]
After rearranging, 
\[
l' = 2^{logM}l - \kappa(u_0)
\]
Now, note that \[
2^{logM}l = \kappa(u_0) + \sum_{i=0}^{\infty} 2^{log M(-i-1)}\kappa(u_{i+1})
\]Since \begin{math}
    \sum_{i=0}^{\infty} 2^{log M(-i-1)}\kappa(u_{i+1}) < 1
\end{math} due to Proposition \ref{sec: Proposition A.1}, we have:
\[
\lfloor 2^{logM}l \rfloor = \kappa (u_0) \implies u_0 = \kappa^{-1} (\lfloor 2^{logM}l \rfloor)
\] 
and 
\[
\{ 2^{logM}l \} = \sum_{i=0}^{\infty} 2^{log M(-i-1)}\kappa(u_{i+1}) = l'
\]
Substituting these results in eq(22) and eq(21) gives:
\[
S(l,x_0) = g(x_0,\kappa^{-1} (\lfloor 2^{log M}l\rfloor) ) + \gamma S(\{2^{logM}l\},f(x_0, \kappa^{-1} (\lfloor 2^{log M}l\rfloor))) 
\]
Since the result is true for any \begin{math}
    x_0
\end{math} in the state space, this proves our claim.
\end{proof}

\subsection{Relation between Score-life functions of any two states}
In the previous section, we derived the relation between the Score-life functions of neighboring states. In this section, we extend the previous result to derive a relationship between the Score-life functions of any two states in the state space. 
\begin{theorem}\label{thm:2}
Let the dynamics of the environment satisfy the equation: \begin{math} x_{k+1} = f(x_k,u_k)\end{math}, where \begin{math}u_k \in U,x_k \in X, |U| = M  \end{math}. Let \begin{math}
    S(l,x_0)
\end{math} be the Score-life function of state \begin{math}
    x_0
\end{math}. Let \begin{math}
    \{u_0\} \{u_1\} \{u_2\}.... \{u_N\}
\end{math} be the action trajectory taken from state \begin{math}
    x_0
\end{math} and let \begin{math}
    \{x_0\} \{x_1\} \{x_2\}.... \{x_N\}
\end{math} be the corresponding state trajectory. Let \begin{math}
    \kappa 
\end{math} be a surjective mapping from action set to the set of binary sequences of length \begin{math}
    log(M)
\end{math}. Then the Score-life function of state \begin{math}
    x_N
\end{math} is given by: \\
\[S(l,x_N) =  \frac{1}{ \gamma^N} (S({\frac{l}{2^{N log M}} + \phi_N},x_0) - \psi_N ) \]
where
\[\phi_N = \sum_{i=0}^{N-1}2^{(N-i-1)log(M)}\kappa(u_i)  \]
and 
\[\psi_N = \sum_{i=0}^{N-1}\gamma^i g(x_i,u_i)  \]
\end{theorem}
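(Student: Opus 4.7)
The plan is to reduce the identity to a direct splitting of the infinite horizon cost, using the fact that prepending a fixed prefix of $N$ actions to a tail action sequence corresponds to a deterministic transformation on life values. A clean inductive alternative exists (iterate Theorem 1 a total of $N$ times, with $N=0$ as the trivial base case), but the direct route is more transparent, so I would pursue that.

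Concretely, first I would decode the life value $l \in [0,1)$ at state $x_N$ into an infinite action sequence $u_N, u_{N+1}, \ldots$ and its induced tail state trajectory $x_N, x_{N+1}, \ldots$, giving by definition
\[ S(l, x_N) \;=\; \sum_{k=0}^{\infty} \gamma^k\, g(x_{N+k}, u_{N+k}). \]
Next I would form the concatenated action sequence $u_0, u_1, \ldots, u_{N-1}, u_N, u_{N+1}, \ldots$ starting from $x_0$, whose life value, read off directly from the encoding rule of Section 3.1, is
\[ l_0 \;=\; \sum_{i=0}^{N-1} 2^{-(i+1)\log M}\,\kappa(u_i) \;+\; 2^{-N\log M}\, l, \]
which is precisely the argument $\tfrac{l}{2^{N\log M}} + \phi_N$ appearing on the right-hand side of the theorem, once the prefix sum is identified with $\phi_N$. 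Finally, I would split the discounted cost along this concatenated sequence from $x_0$ into the first $N$ stages and the $\gamma^N$-discounted tail, obtaining
\[ S(l_0, x_0) \;=\; \psi_N \;+\; \gamma^N\, S(l, x_N), \]
and the claimed identity follows by solving for $S(l,x_N)$.

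The main obstacle is really just bookkeeping rather than any deep step. Two checks need care: (a) verifying $l_0 \in [0,1)$ so that $S(l_0, x_0)$ lies in the declared domain of the Score-life function, which follows from Proposition 1 applied to the concatenated sequence; and (b) matching the prefix sum $\sum_{i=0}^{N-1} 2^{-(i+1)\log M}\kappa(u_i)$ with the stated form of $\phi_N$, and the finite cost sum $\sum_{i=0}^{N-1}\gamma^i g(x_i,u_i)$ with $\psi_N$. Once these identifications are pinned down, the argument reduces to a one-line telescoping of the infinite discounted cost followed by division by $\gamma^N$.
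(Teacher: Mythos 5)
Your proposal is correct and follows essentially the same route as the paper's proof: split the discounted cost from $x_0$ into the first $N$ stages ($\psi_N$) plus $\gamma^N$ times the tail, and relate the two life values through the prefix relation $l_0 = \phi_N + 2^{-N\log M}\, l$, the only cosmetic difference being that the paper starts from the life value at $x_0$ and strips the prefix, while you start from the life value at $x_N$ and prepend it. One remark on your check (b): the identification only works with $\phi_N = \sum_{i=0}^{N-1} 2^{(-i-1)\log M}\kappa(u_i)$, which is what the paper's proof actually uses; the exponent $(N-i-1)\log M$ in the theorem statement appears to be a typo (it would place the argument of $S(\cdot,x_0)$ outside $[0,1)$ unless the whole sum $l+\phi_N$ were divided by $2^{N\log M}$).
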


\begin{proof}
 First, we group the first \begin{math}
    N
\end{math} stage cost terms, and write the Score-life function as:
\begin{equation}
S(l,x_0) = \sum_{i=0}^{N-1} \gamma^i g(x_i,u_i) + \gamma^N (\sum_{i=0}^{\infty} \gamma^i g(x_{i+N},u_{i+N}) )
\end{equation}
Note that the infinite summation term is equal to the Score-life function of the state \begin{math}
    x_N
\end{math} evaluated at \begin{math}
    l_N = \sum_{i=0} ^{\infty} 2^{ (-i-1) log M}\kappa(u_{i+N})
\end{math}. Substituting this result in eq(23) gives:
\begin{equation}
    S(l,x_0) = \sum_{i=0}^{N-1} \gamma^i g(x_i,u_i) + \gamma^N S(l_N,x_N)
\end{equation}
Now we derive a relation between \begin{math}
    l_N
\end{math} and \begin{math}
    l
\end{math}. By grouping first \begin{math}
    N
\end{math} terms, we can write the equation of l as:
\[
l = \sum_{i=0}^{N-1} 2^{(-i-1)log{M}} \kappa(u_i) +  2^{-N log M} \sum_{i=0}^{\infty} 2^{(-i-1)log M}\kappa(u_{i+N})
\]
\[
l = \sum_{i=0}^{N-1} 2^{(-i-1)log{M}} \kappa(u_i) +  2^{-Nlog M} l_N
\]
After further algebraic rearrangements and substituting \begin{math}
    \phi_N = \sum_{i=0}^{N-1} 2^{(-i-1)log{M}} \kappa(u_i)
\end{math}, we get:
\[
l_N = 2^{N log M} (l - \phi_N)
\]Now we make the substitution \begin{math}
    \psi_N = \sum_{i=0}^{N-1}\gamma^i g(x_i,u_i)   
\end{math} in eq(24), and we get:
\begin{equation}
    S(l,x_0) = \psi_N + \gamma^N S(2^{N log M} (l - \phi_N),x_N)
\end{equation}
After change of variables, we get:
\begin{equation}
 S(l',x_N) =  \frac{1}{ \gamma^N} (S({\frac{l'}{2^{N log M}} + \phi_N},x_0) - \psi_N ) 
\end{equation}
Or,
\begin{equation}
 S(l,x_N) =  \frac{1}{ \gamma^N} (S({\frac{l}{2^{N log M}} + \phi_N},x_0) - \psi_N ) 
\end{equation}
\end{proof}

\section{Algorithms}
In this section, we present further details of Algorithms presented for the computation of the Score-life function and optimal infinite horizon action sequence. 
\subsection{Exact Methods}
In exact methods, we represent the Score-life function using the Faber-Schauder basis and compute the coefficients corresponding to Faber-Schauder basis functions. Specifically, we write:
\[
S(l,x) \approx \alpha_0(x) + \alpha_1(x)l + \sum_{j=0}^{n} \sum_{i=0}^{2^j - 1} \alpha_{ij}(x) e_{ij}(l)
\]
where \begin{math}
    e_{ij}(l)
\end{math} are Faber Schauder basis functions, and the coefficients \begin{math}
    \alpha_0, \alpha_1 
\end{math} and \begin{math}
    \alpha_{ij} 
\end{math} can be computed using eq(10-12). 
Computation of Faber Schauder coefficients is expensive and involves \begin{math}
    \mathcal{O}(2^n)
\end{math} queries to the Score-life function. This is because \begin{math}
    n^{th}
\end{math}   order approximation of the Score-life function involves \begin{math}
    \mathcal{O}(2^n)
\end{math} summation terms, with different coefficients for each term. 
\begin{algorithm}[H]
\caption{Computation of Faber Schauder Coefficients for any state \begin{math}
    x_0
\end{math}}
\SetAlgoLined
\textbf{Input: }
 Dynamics Model \begin{math}x_{k+1} = f(x_k,u_k)\end{math}, State \begin{math}x\subseteq X \end{math}, Approximate Score-life function estimator \begin{math}
     S(l,x) = \sum_{i=0}^n \gamma^i g(x_i,u_i)
 \end{math}\\
   \textbf{Result: }
Coefficients of Faber Schauder representation of the Score-life function \\
 
\textbf{Initialize: } \begin{math}i,j = 0\end{math}, \begin{math}\alpha_0,\alpha_1,\alpha_{i}{j}= 0\end{math},\begin{math}\forall x \in X_f\end{math}, \\
\begin{math}
     \alpha_0(x_0) = S(l=0,x)   
\end{math}
\\
\begin{math}
    \alpha_1(x_0) =S(l=1,x) - S(l=0,x)
\end{math}\\
\While{j < n}
{
\While{i < \begin{math}
    2^j -1
\end{math}} {
{\begin{math}
     \alpha_{ij}(x_0) = S(l=\frac{2i + 1}{2^{j+1}},x_0) - \frac{1}{2}(S(l=\frac{i}{2^j},x_0)+ S(l = \frac{i+1}{2^j},x_0))    
\end{math}}

}

}
\end{algorithm}
Hence computing Faber Schauder coefficients for all states and storing them is infeasible. Instead, we compute Faber Schauder coefficients for a single state, and for other states, we apply Theorem 2, and compute \begin{math}
    \phi_N
\end{math}, \begin{math}
    \psi_N
\end{math} and \begin{math}
    N
\end{math}. Note that for any valid solution, \begin{math}
    N > 0
\end{math}. To compute \begin{math}
    \phi_N
\end{math}, \begin{math}
    \psi_N
\end{math} and \begin{math}
    N
\end{math}, for a given state \begin{math}
    x_N
\end{math}, we have to solve the nonlinear equation:
\begin{equation}
 S(l,x_N) =  \frac{1}{ \gamma^N} (S({\frac{l}{2^{N log M}} + \phi_N},x_0) - \psi_N ) 
\end{equation}
For general Score-life functions, closed form solution doesn't exist, and we would have to compute the unknowns by using nonlinear regression methods. Specifically, we evaluate 
the Score-life function \begin{math}
    S(l,x_N)
\end{math} at datapoints \begin{math}
    \{l_i\}_{i=0}^n
\end{math}, and construct the dataset \begin{math}
    D = \{l_i, S(l_i,x_N)\}_{i=0}^{n}
\end{math} for a finite number of samples \begin{math}
    n
\end{math}. Then, we minimize the following objective function to compute \begin{math}
    \theta = [\phi_N,\psi_N,N]^T
\end{math}
\begin{equation*}
\min_{\theta} \sum_{i=1}^{n} \left(S(l_i,x_N;\theta) - S(l_i,x_N)\right)^2
\end{equation*}
After computing the unknown parameters \begin{math}
    \phi_N
\end{math}, \begin{math}
    \psi_N
\end{math} and \begin{math}
    N
\end{math}, to estimate the optimal infinite horizon action sequence, we compute the minima of the Score-life function.
\begin{equation}
    l^*(x_N) = argmin_{l\in [0,1)} S(l,x_N)
\end{equation}
Computing the optimal \begin{math}
     l^*(x_N)
\end{math} and computing the unknown parameters 
After computing the unknown parameters \begin{math}
    \phi_N
\end{math}, \begin{math}
    \psi_N
\end{math} and \begin{math}
    N
\end{math}, would require optimizing and differentiating fractal functions, which we discuss in the next section.
\subsubsection{Optimizing Fractal Functions}
The Score-life functions are typically fractal functions, and fractal functions are hard to optimize. Technically, fractal functions are non-differentiable everywhere, and hence it is not theoretically valid to apply gradient based techniques to fractal functions. However, in practice, for an \begin{math}
    n^{th}
\end{math} order approximation of a fractal function, we can evaluate the derivative at \begin{math}
    l
\end{math} values, by estimating:
\[
\frac{\partial S(l, x)}{\partial l} \approx   \alpha_1(x) + \sum_{j=0}^{n} \sum_{i=0}^{2^j - 1} \alpha_{ij}(x) e_{ij}'(l)
\]
where \begin{math}
     e_{ij}'(l)
\end{math}
is given by:
\[
 e_{ij}'(l) = \frac{\partial}{\partial l} (2^j(|l - \frac{i}{2^j}| + |l -\frac{i+1}{2^j}| - |2l - \frac{2i + 1}{2^{j}}|))
\]
Note that theoretically, the derivative of the function is \begin{math}
    |al -b| 
\end{math} is not defined at \begin{math}
    l = \frac{b}{a}
\end{math}. In practice, at \begin{math}
    l = \frac{b}{a}
\end{math}, we set the derivative \begin{math}
    \frac{\partial}{\partial l} (|al -b|) = a
\end{math} when \begin{math}
     l = \frac{b}{a}
\end{math}. This is similar to estimating the derivative of Relu activation functions in deep neural networks. Empirically, we have found that this approach can succesfully find the minima, but the precision of \begin{math}
    l^*(x_N)
\end{math} is relatively low. 

\begin{algorithm}[H]
\caption{Computation of optimal $l^*(x)$ given Faber Schauder Coefficients for any state $x$}
\SetAlgoLined
\textbf{Input:} Faber Schauder representation of the Score-life function $S(l,x)$, for state $x\subseteq X$ \\
\textbf{Result:} Optimal $l^*$ \\
\textbf{Initialize:} $\eta = 0.001$, $\delta = 0.01$, $i=0$, $l \sim \text{Uniform}(0, 1)$ \\
\While{$\left(\frac{\partial S(l, x)}{\partial l}\right)^2 \geq \delta$}{
    $g_i = \frac{\partial S(l, x)}{\partial l}$ \\
    $l = l - \eta g_i$ \\
    \If{$g_{i-1}\cdot g_i < 0$}{\textbf{break}}
    $i \leftarrow i + 1$
}
\end{algorithm}

\subsection{Approximate Methods}
In approximate methods, we approximate the Score-life function using a polynomial of degree \begin{math}
    N_{poly}
\end{math}, and use it to compute the cost to go from a given state \begin{math}
    x
\end{math}. Then, to compute optimal actions, we estimate:
\begin{equation}
\pi^*(x) = argmin_{a_i \in U}(g(x,a_i) + \min_{l\in[0,1]}\gamma S_{poly}(l,f(x,a_i))
\end{equation}
The minimum of polynomial function \begin{math}
    S_{poly}
\end{math} can be evaluated in closed form, and hence in practice, approximate methods are more efficient. However, note that we would not be able to compute action sequences using approximate methods. 
\begin{algorithm}[H]
\caption{Computation of polynomial approximation of the Score life function \begin{math}
    x
\end{math}}
\SetAlgoLined
\textbf{Input: }
 Dynamics Model \begin{math}x_{k+1} = f(x_k,u_k)\end{math}, State \begin{math}x\subseteq X \end{math}, Approximate Score-life function estimator \begin{math}
     S(l,x) = \sum_{i=0}^n \gamma^i g(x_i,u_i)
 \end{math}\\
   \textbf{Result: }
 Polynomial approximation of Score-life value function of state \begin{math}x\end{math}\\
 
\textbf{Initialize: } \begin{math}\alpha_i(x)= 0\end{math}, \\
\While{i < n}
{
\begin{math}
    l_i \sim \text{Uniform}(0, 1)
\end{math}\\
\Comment{//Evaluate Score-life function at \begin{math}
    l_i
\end{math}}\\
\begin{math}
    y_i = S(l_i,x)
\end{math}\\
\Comment{//Store \begin{math}
   \{ l_i,y_i\}
\end{math} in a dataset D.} 
}
\Comment{Compute coefficients 
\begin{math}
    \alpha_i
\end{math}
of polynomial representation of the Score-life function 
\begin{math}
    S_{poly}
\end{math}
}
\\
\begin{math}
    \alpha = argmin_{\alpha} \sum_{i=0}^{n-1} \left(S_{poly}(l_i,x;\alpha) - S(l_i,x)\right)^2
\end{math}
\\
\end{algorithm}
After computing polynomial approximation for a state \begin{math}
    x_0
\end{math}, polynomial Score-life function of a given state \begin{math}
    x_N
\end{math} can be computed by applying Theorem 2 and computing \begin{math}
    \phi_N
\end{math}, \begin{math}
    \psi_N
\end{math}, \begin{math}
    N
\end{math}.
\begin{equation}
 S_{poly}(l,x_N) =  \frac{1}{ \gamma^N} (S_{poly}({\frac{l}{2^{N log M}} + \phi_N},x_0) - \psi_N ) 
\end{equation}
To solve eq(31), we evaluate 
the Score-life function \begin{math}
    S(l,x_N)
\end{math} at datapoints \begin{math}
    \{l_i\}_{i=0}^n
\end{math}, and construct the dataset \begin{math}
    D = \{l_i, S(l_i,x_N)\}_{i=0}^{n}
\end{math} for a finite number of samples \begin{math}
    n
\end{math}. Then, we minimize the following objective function to compute \begin{math}
    \theta = [\phi_N,\psi_N,N]^T
\end{math}
\begin{equation*}
\min_{\theta} \sum_{i=1}^{n} \left(S_{poly}(l_i,x_N;\theta) - S(l_i,x_N)\right)^2
\end{equation*}
After computing the parameters, \begin{math}
    \theta = [\phi_N,\psi_N,N]^T
\end{math}, we can compute the optimal actions by using eq (30). As the degree of the polynomial is increased, the optimal value of \begin{math}
    S_{poly}(l,x)
\end{math} approaches optimal infinite horizon cost \begin{math}
    J^*(x)
\end{math}. 
\section{Simulation Results}

In this section, we report the simulation results of running our algorithms on the cart pole dynamical system. We conducted several experiments for different values of hyperparameters and stage cost functions. We noticed that in practice, approximate methods outperform exact methods, and can efficiently compute optimal policies . 
\subsection{Monte-Carlo Experiments}

We conducted monte carlo simulations to estimate Score-life function of the cart pole dynamical system for various values of discount factor \begin{math}
    \gamma 
\end{math}. We used \begin{math}
    x^T Q x
\end{math} as the stage cost function. Where \begin{math}
    {x} = [x,\dot{x},\theta,\dot{\theta}]^T
\end{math} is the 4 dimensional state of the cart pole system and \begin{math}
    Q_{diag}  = [2,1,8,1]
\end{math}. For small \begin{math}
    \gamma 
\end{math} values, the fractal function is well-behaved, however for larger \begin{math}
    \gamma 
\end{math} values, the oscillatory behavior of the fractal function increases and hence it becomes harder to compute global optima (Fig 5).
\begin{figure}
    \centering   
    \begin{subfigure}{0.45\textwidth}
        \includegraphics[width=\linewidth]{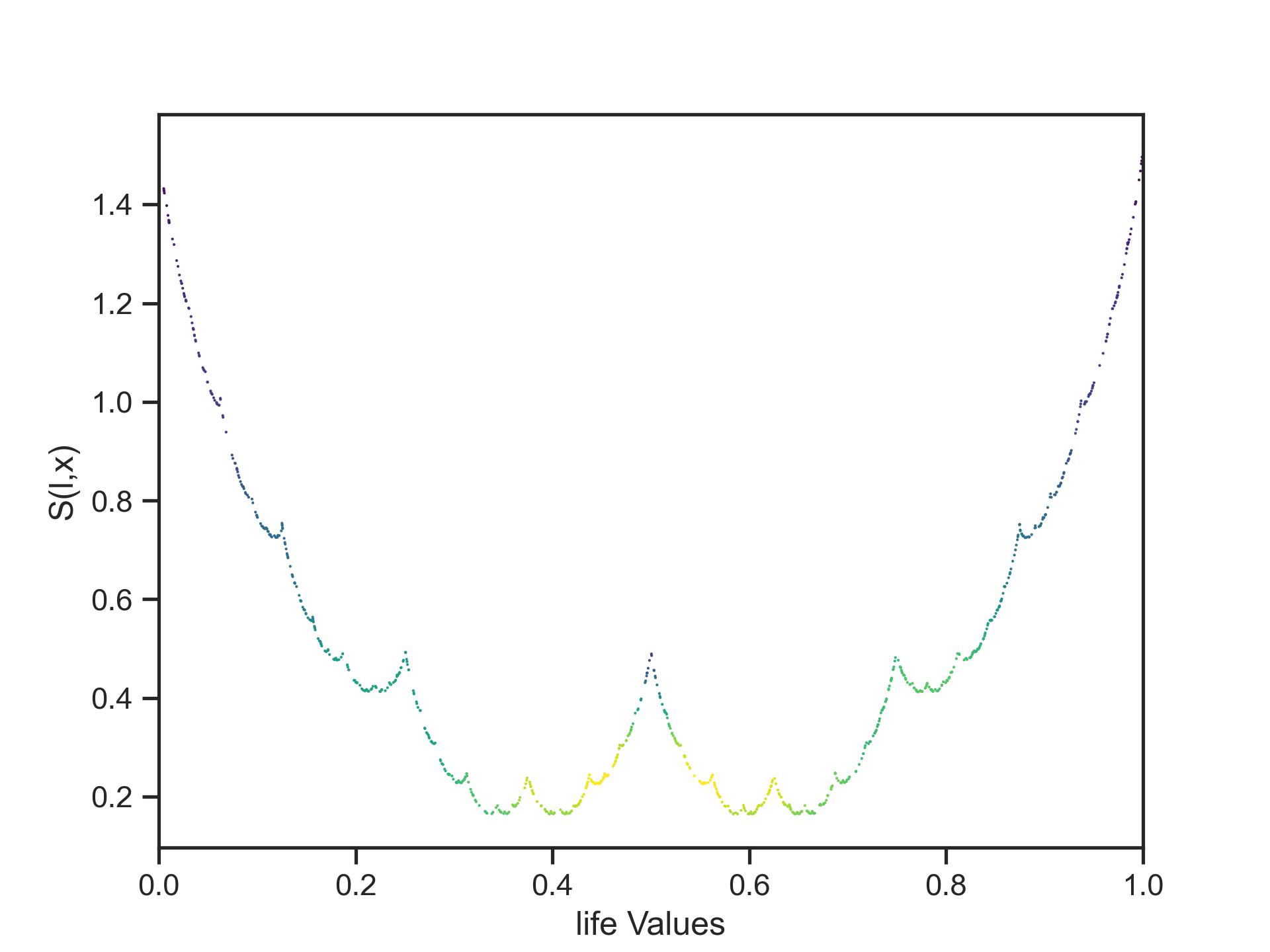}
        \caption{\begin{math}
            \gamma = 0.5
        \end{math}}
        \label{fig:plot1}
    \end{subfigure}
    \hfill
    \begin{subfigure}{0.45\textwidth}
        \includegraphics[width=\linewidth]{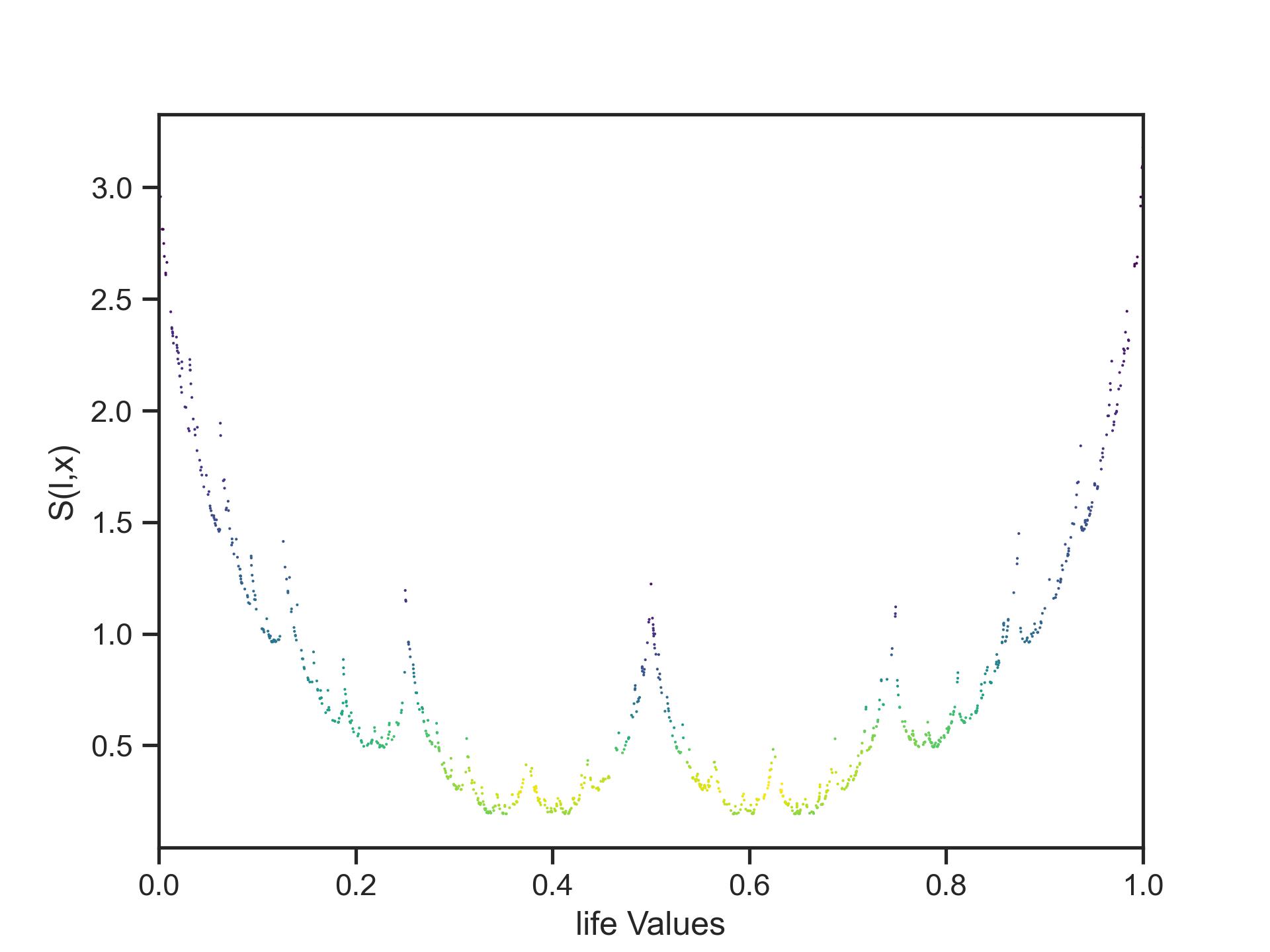}
        \caption{\begin{math}
            \gamma = 0.6
        \end{math}}
        \label{fig:plot2}
    \end{subfigure}
    
    \begin{subfigure}{0.45\textwidth}
        \includegraphics[width=\linewidth]{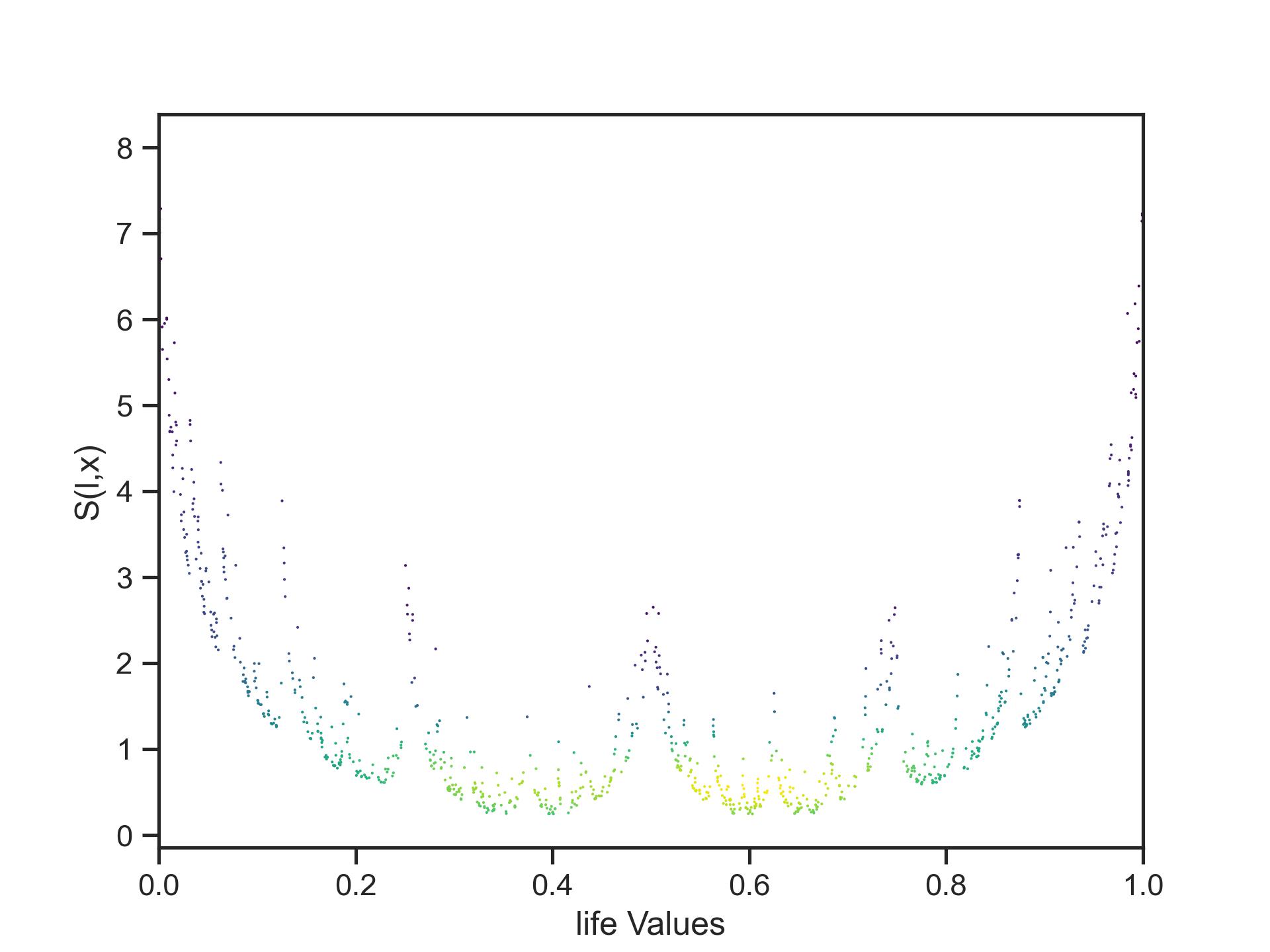}
        \caption{\begin{math}
            \gamma = 0.7
        \end{math}}
        \label{fig:plot3}
    \end{subfigure}
    \hfill
    \begin{subfigure}{0.45\textwidth}
        \includegraphics[width=\linewidth]{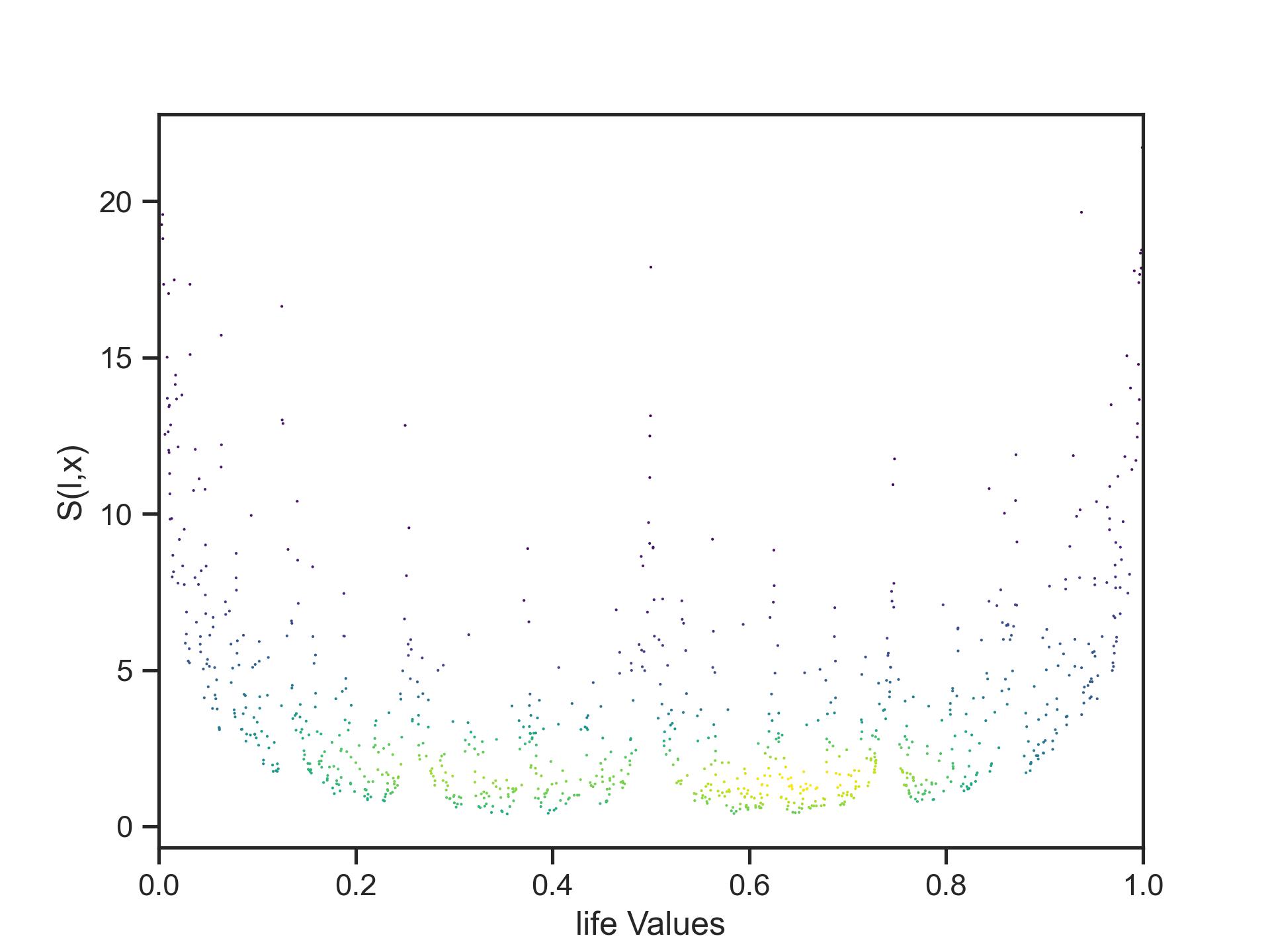}
        \caption{\begin{math}
            \gamma = 0.8
        \end{math}}
        \label{fig:plot4}
    \end{subfigure}
    
    \caption{Score-life function of the origin state of cart pole system for different values of \begin{math}
        \gamma 
    \end{math}}
    \label{fig:four_plots}
\end{figure}

\begin{figure}[htbp]
  \centering
  \begin{subfigure}[b]{0.45\textwidth}
    \includegraphics[width=\textwidth]{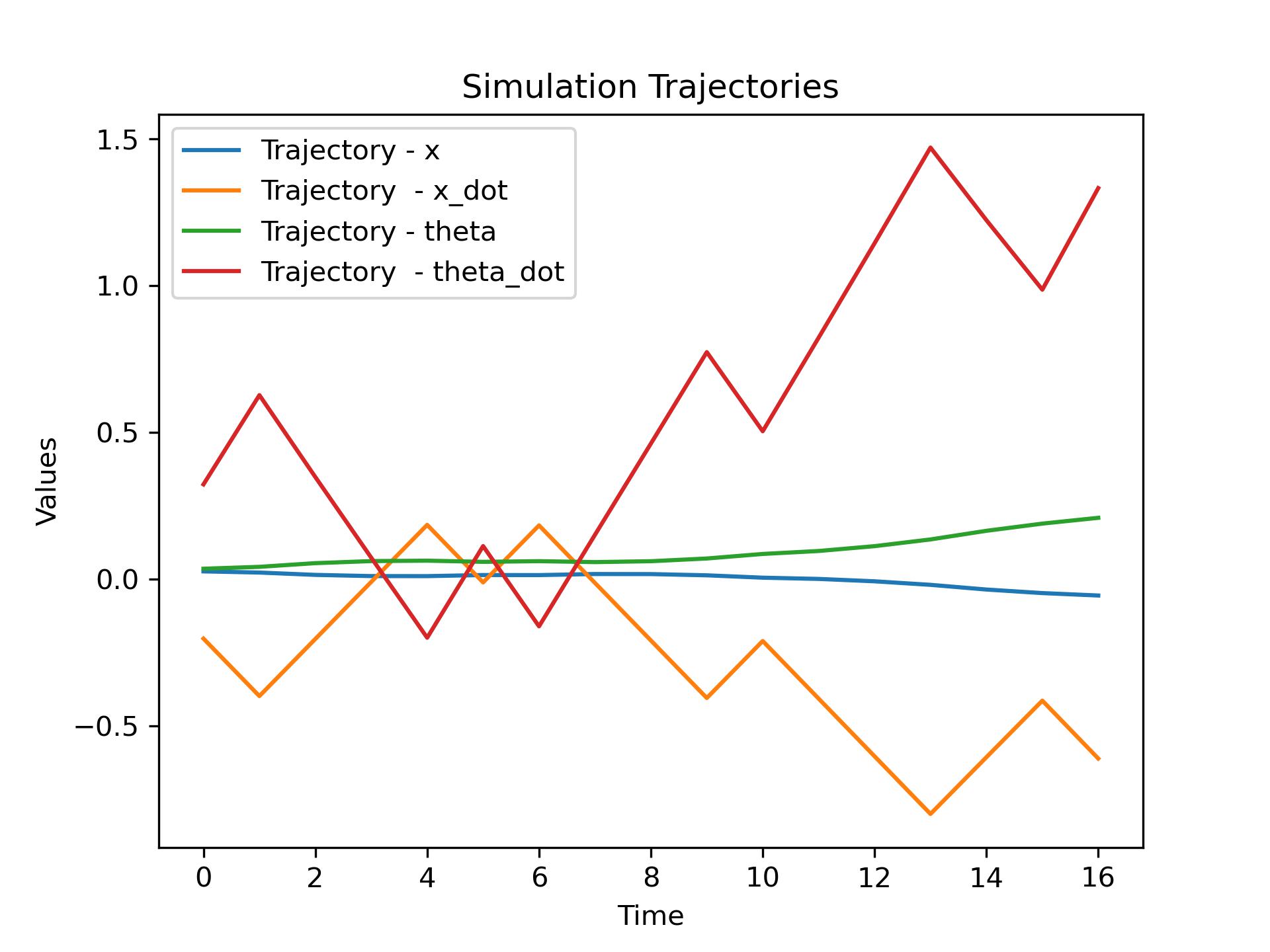}
    \caption{State Trajectory (Exact Methods)}
    \label{fig:image1}
  \end{subfigure}
  \hfill
  \begin{subfigure}[b]{0.45\textwidth}
    \includegraphics[width=\textwidth]{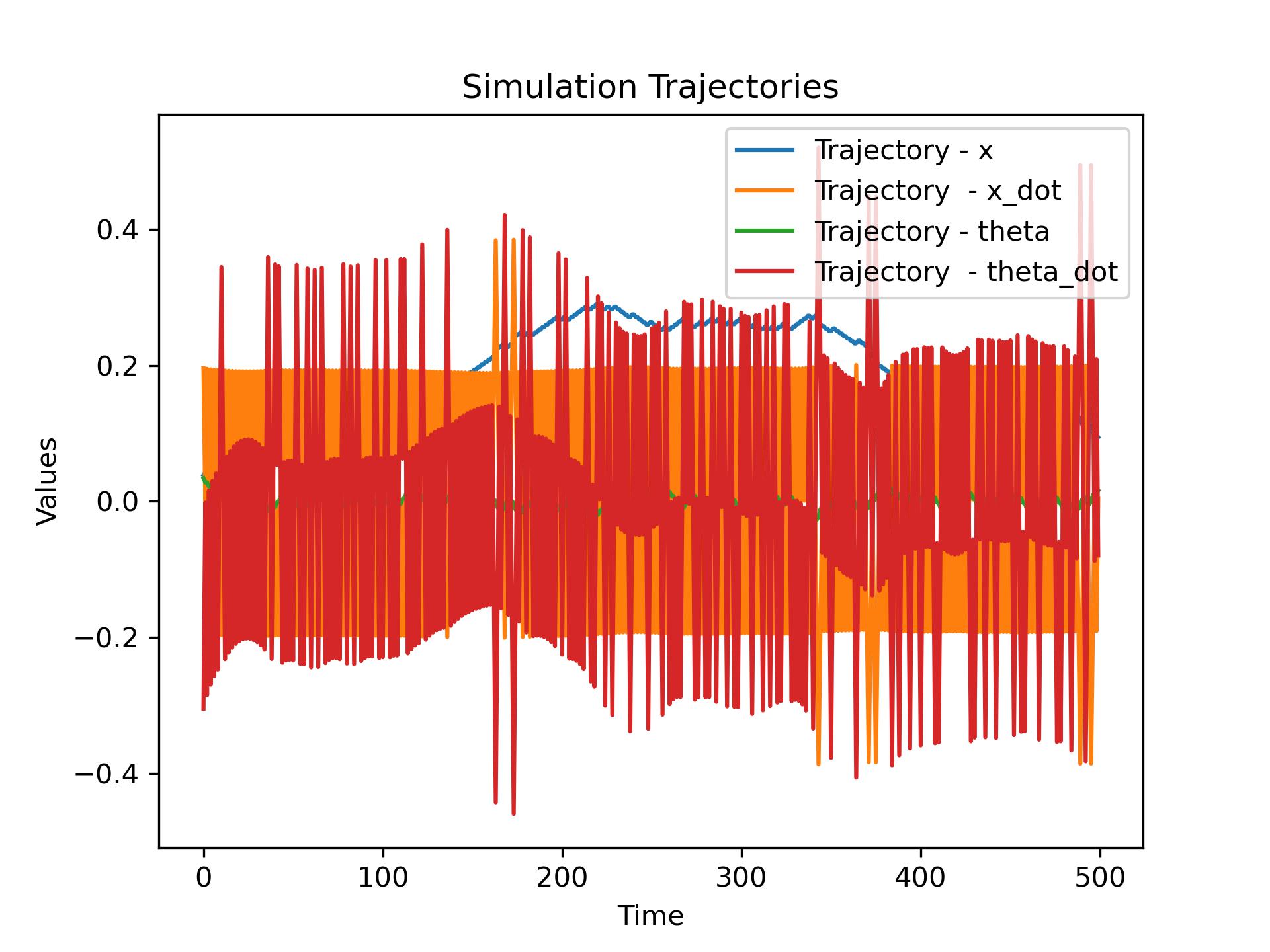}
    \caption{State Trajectory (Approximate Method)}
    \label{fig:image2}
  \end{subfigure}
  \caption{Trajectory plots comparing the exact and approximate methods for the Cart Pole Dynamical System.}
  \label{fig:side_by_side}
\end{figure}

\subsection{Exact Methods}

We applied Algorithm 2 for computing Faber Schauder representation of the Score-life function and computed optima of the fractal Score-life function using Algorithm 3. In our implementation, after computing \[
l^* = argmin_{l \in [0,1)} S(l,x)
\]
we extracted initial 10 bits of \begin{math}
    l^* 
\end{math} and applied the action sequence to the state \begin{math}
    x_0
\end{math}, \begin{math}
    x_1
\end{math}, until \begin{math}
    x_9
\end{math}, and after that we recomputed Faber Schauder representation of the Score-life function for state \begin{math}
    x_9
\end{math}, computed the optimal action sequence again and repeated the same scheme for all states. We noticed that the solutions from exact methods are not stable, this is likely due to errors in computing global optima of the fractal Score-life function. 
In principle, the optimal infinite horizon action sequence can be computed from the exact Score-life function, however, in practice, due to errors in the optimizer, we can only compute finite horizon action sequences. 

\subsection{Approximate Method}
We applied Algorithm 4 for computing the optimal infinite horizon cost and optimal actions for the cart pole dynamical system in openai gym \cite{brockman2016openai}. We used a quadratic approximation for the Score-life function, and we set \begin{math}
    \gamma = 0.8
\end{math}. 
\begin{figure}
\begin{center}
\includegraphics[scale=0.5]{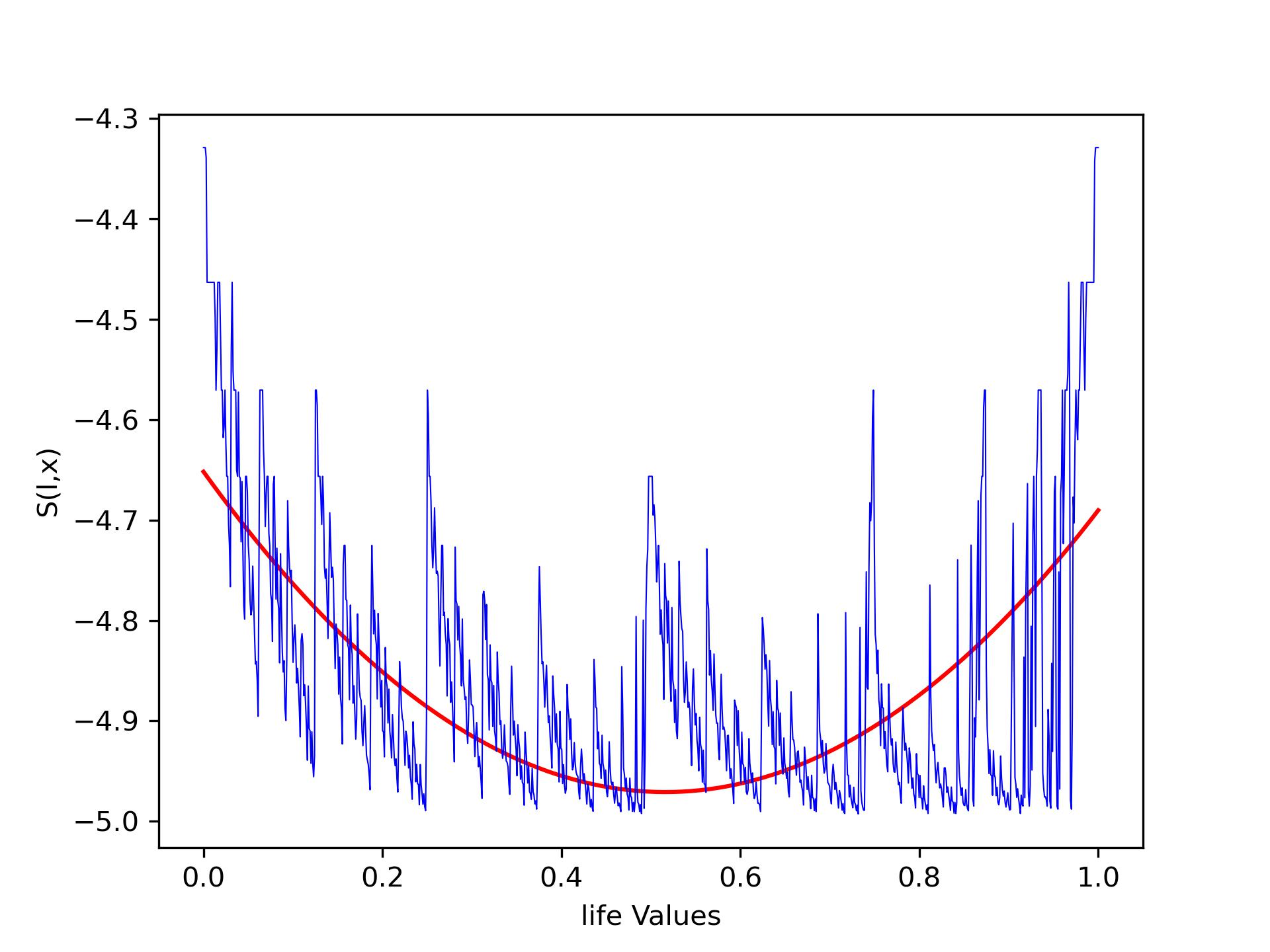}
\end{center}
\caption{ Approximate quadratic Score-life function shown in red and Exact Score-life function shown in blue}
\centering
\end{figure}
We used the predefined reward function in the cart pole environment, and set stage cost as negative of the reward value. Our method successfully balances the cartpole dynamical system for 500 timesteps, and achieves a cumulative reward value of 500. 

We noticed that in practice, approximate methods are more efficient and have much better performance in contrast to exact methods. This is likely due to the fact that in exact methods, the fractal Score-life function has many local minimas, and often the solver does not reach global optimal value. Hence, the computed action sequences from exact methods are suboptimal. This is not an issue in approximate methods, as in this case we use the polynomial approximation to compute optimal cost-to-go and compute instantaneous actions instead of action sequences. 
\end{document}